\documentclass{article}
\pdfoutput=1

\usepackage[preprint]{neurips_2026}

\usepackage[utf8]{inputenc}
\usepackage[T1]{fontenc}
\usepackage{hyperref}
\hypersetup{
  pdftitle={On the Representational Geometry of Dynamic Programs},
  pdfauthor={Richard F. M. Lim, Ruriko Yoshida}
}
\usepackage{booktabs}
\usepackage{array}
\usepackage{amssymb}
\usepackage{microtype}
\usepackage{xcolor}
\usepackage{mathtools}
\usepackage{amsthm}
\usepackage{tikz}
\usetikzlibrary{arrows.meta,calc,positioning}

\newtheorem{theorem}{Theorem}
\newtheorem{proposition}[theorem]{Proposition}
\newtheorem{lemma}[theorem]{Lemma}
\newtheorem{corollary}[theorem]{Corollary}
\newtheorem{definition}[theorem]{Definition}
\newtheorem{example}[theorem]{Example}

\DeclareMathOperator{\dist}{dist}
\DeclareMathOperator{\vmin}{vmin}

\title{On the Representational Geometry of Dynamic Programs}

\author{
  Richard F. M.~Lim\textsuperscript{1, 3}\\
  \texttt{rlim@bowdoin.edu}
  \And
  Ruriko~Yoshida\textsuperscript{2, 3}\\
  \texttt{ry@math.aau.dk}
  \AND
  \normalfont
  \textsuperscript{1} Departments of Mathematics and Computer Science, Bowdoin College, Brunswick, Maine \\
  \textsuperscript{2} Department of Mathematical Sciences, Aalborg University, Denmark \\
  \textsuperscript{3} Department of Operations Research, Naval Postgraduate School, Monterey, California
}

\begin{document}

\maketitle

\begin{abstract}
Standard neural architectures often fail to generalize to longer inputs for dynamic programming (DP) targets. We investigate what makes this hard geometrically. Every finite min-plus DP is a shortest path on a DAG, which is equivalently a tropical polynomial whose extended Newton polyhedron encodes the decision boundary of which path wins.
We prove these three descriptions (graph, polynomial, polyhedron) form isomorphic semirings at two levels --- formal polynomials and their computed functions --- connected by operations that characterize all structural redundancies. We then address the length-generalization question geometrically: does the decision boundary at length $T$ decide the boundary at $T+1$? We present two structural negatives. The semiring's two native ways to reduce dimension (setting a variable to each identity) are neither injective nor always closed within the DP. Series and parallel composition fail to construct all DAG topologies from smaller sub-DAGs, and even all terminal-only operations do not capture all DP compositions.
\end{abstract}

\section{Introduction}\label{sec:intro}

A dynamic program is defined by a recurrence that does not depend on the size of its input, so a model that has truly learned the recurrence would run at any length.
Standard sequence architectures do not. Trained on combinatorial optimization targets, they fit the training lengths and then degrade sharply beyond them, a failure documented across recurrent and attention-based models and robust to scale \citep{deletang2023,anil2022}.
Tropical Attention \citep{hashemi2025} is a conspicuous exception; replacing addition and multiplication with max-plus (tropical) operations $(\max,+)$, it achieves strong length out-of-distribution (OOD) generalization on classic DP problems.

The premise is classical. After unrolling, every finite min-plus dynamic program is: (1) a single-pair shortest-path problem on a decision DAG with a single source $s$ and single sink $z$, which we call \emph{two-terminal} \citep{bellman1957,gondran1979}; and (2) a tropical polynomial \citep{joswig2021essentials}.
For alignment with the shortest-path formulation, we work throughout in the dual $(\min, +)$ semiring $\mathbb{T} = \mathbb{R} \cup \{+\infty\}$, equivalent to $(\max, +)$ under negation \citep{joswig2021essentials}.
We write $a \oplus b = \min(a,b)$ and $a \odot b = a + b$ when emphasizing the semiring structure.

Our work is focused on length generalization: a learner observes small graphs with at most $T$ variables and must predict every graph after that. We therefore investigate whether this is possible via the questions: (1) What algebraic structure in DP allows algorithms to work on every length? (2) Does this structure allow us to infer larger instances from smaller ones?

Composing two-terminal DAGs in series adds their path lengths; composing them in parallel takes the minimum (Figure~\ref{fig:sp-comp} in Appendix~\ref{app:sp-comp}).
We first formalize the relationship between graphs, polynomials, and geometry via a triple semiring isomorphism (Section~\ref{sec:triality}). Multiple \emph{formal} polynomials may map to the same \emph{function} --- $\min(0, x, 2x) = \min(0, 2x)$ for all $x$ despite different supports --- so we construct the formalism at both levels and relate them with congruence quotients. Graph structures are even richer than their polynomials, so we provide a canonical \emph{deshared} normal form for every tropical polynomial and a list of invariant operations that traverse equivalence classes on DAGs (Section~\ref{sec:invariance}).

A natural way to relate larger instances to smaller ones is substitution at the semiring
identities: setting a variable to $+\infty$ removes it from a shortest-path or knapsack
instance, while setting it to $0$ removes it from a min-subarray or assignment instance. We employ the above framework to provide a characterization of length recovery under substitution in terms of graph, polynomial, polyhedron, and the \emph{decision boundary} $\mathcal{T}(f)$, known as the \emph{tropical hypersurface} \citep{zhang2018tropical}. We show in Section~\ref{sec:geometry} that both substitutions are not always closed within the DP, and do not recover length even when they are: the $T$-th
instance does not determine the $(T{+}1)$-th.  Finally, Section~\ref{sec:sp-limits} identifies series and parallel composition as the two
faces of Bellman's equation on decision DAGs, then exposes a two-layered topological
insufficiency: together they do not generate all topologies of terminal-only operations, and terminal-only operations do not capture all DP compositions.

\section{The triality: one problem, three languages}\label{sec:triality}
Appendix~\ref{app:crash-course} details the tropical geometry background used throughout.
A \emph{formal tropical polynomial} in $d$ variables is a finite tropical sum of monomials,
$f(x) = \bigoplus_{\alpha} c_\alpha \odot x^{\odot\alpha}
      = \min_\alpha(c_\alpha + \langle \alpha, x\rangle)$,
with exponents $\alpha \in \mathbb{Z}_{\ge 0}^d$ and coefficients $c_\alpha \in \mathbb{T}$
($c_\alpha = {+}\infty$ meaning the term is absent). Its \emph{support} is
$S(f) = \{\alpha : c_\alpha \ne {+}\infty\}$, and its \emph{lifted support}
$\hat S(f) = \{(\alpha, c_\alpha) : \alpha \in S(f)\}$ raises each point to its
coefficient height. Terms sharing an exponent keep only the lesser coefficient,
$(c \oplus c') \odot x^{\odot\alpha}$; call this per-exponent reduction $\vmin$ (vertical
minimum), and say $f$ is in \emph{normal form} once reduced by it, one coefficient per exponent.
The normal-form lifted supports $\mathcal{L}_d$ --- finite subsets of
$\mathbb{Z}_{\ge 0}^d \times \mathbb{R}$ with at most one point over each exponent --- form a
commutative idempotent semiring under $\vmin(\cdot \cup \cdot)$ and Minkowski addition
(Proposition~\ref{prop:config-semiring}). The \emph{extended Newton polyhedron} is
$\Gamma(f) = \operatorname{conv}(\hat S(f)) + \mathbb{R}_{\ge 0}\,e_{d+1}$, the convex hull of
the lifted support extended upward. Two polynomials compute the same function if and only if
their extended Newton polyhedra agree \citep{joswig2021essentials}. Thus, convexity is the second quotient, and $\vmin$ becomes $\operatorname{conv}$. The ordinary \emph{Newton polytope}
$\mathcal{N}(f) = \operatorname{conv}(S(f))$ is its vertical projection, constructed via monomials but not their coefficients.

Every two-terminal weighted DAG $G$ computes a min-plus \emph{path polynomial}. This may be constructed by enumerating paths $\pi$ from source $s$ to sink $z$ as follows: $f_G = \min_{\pi : s \to z} \sum_{e \in \pi} w_e$, the minimum over $s$--$z$ paths in
normal form. For DAGs $A$ and $B$, $f_{A;B} = f_A + f_B$ and $f_{A\|B} = \min(f_A, f_B)$ (Proposition~\ref{prop:path-hom}). However, two graphs may have different topologies but compute exactly the same shortest-path function, so we define equivalence relations that erase exactly this excess:
$G \approx_{\mathrm f} G' $ iff $ \hat{S}(f_G) = \hat{S}(f_{G'}), $ and $ G \approx_{\mathrm v} G' $ iff $ f_G, f_{G'} \text{ compute the same function.}$

Both are congruences (compatible with $;$ and $\|$). The following
theorem says each quotient produces a semiring isomorphic to its associated polynomial and
polyhedron: at each level, graph, algebra, and geometry are the \emph{same} semiring.

\begin{theorem}\label{thm:triality}
The following diagram of commutative idempotent semirings commutes;
horizontal maps are isomorphisms, vertical maps are surjective homomorphisms.
\begin{center}
\resizebox{\linewidth}{!}{%
\begin{tikzpicture}[
  node distance=0.7cm and 1.8cm,
  every node/.style={font=\small},
  sr/.style={draw,rounded corners=3pt,inner sep=5pt,align=center},
  iso/.style={->},
  surj/.style={-{Stealth[open,length=5pt,width=4pt]}},
  >=Stealth]
\node[sr] (Gf)
  {$\mathrm{DAGs}/{\approx_{\mathrm f}}$\\[-2pt]
   {\scriptsize$(\|,\;{;})$}};
\node[sr,right=of Gf] (Af)
  {formal trop.\ polynomials\\[-2pt]
   {\scriptsize$(\min,\,{+})$}};
\node[sr,right=of Af] (Sf)
  {$\mathcal{L}_d$\\[-2pt]
   {\scriptsize$(\vmin(\cdot\!\cup\!\cdot),\,{+})$}};
\node[sr,below=of Gf] (Gv)
  {$\mathrm{DAGs}/{\approx_{\mathrm v}}$\\[-2pt]
   {\scriptsize$(\|,\;{;})$}};
\node[sr,below=of Af] (Av)
  {$\mathbb{T}^d\!\to\!\mathbb{T}$\\[-2pt]
   {\scriptsize$(\min,\,{+})$}};
\node[sr,below=of Sf] (Sv)
  {ext.\ Newton polyhedra\\[-2pt]
   {\scriptsize$(\operatorname{conv}(\cdot\!\cup\!\cdot),\,{+})$}};
\draw[iso] (Gf) -- node[above]{\scriptsize$[G]\!\mapsto\! f_G$}
                    node[below]{\scriptsize$\cong$} (Af);
\draw[iso] (Af) -- node[above]{\scriptsize$f\!\mapsto\!\hat S(f)$}
                    node[below]{\scriptsize$\cong$} (Sf);
\draw[iso] (Gv) -- node[above]{\scriptsize$[G]\!\mapsto\! f_G$}
                    node[below]{\scriptsize$\cong$} (Av);
\draw[iso] (Av) -- node[above]{\scriptsize$f\!\mapsto\!\Gamma(f)$}
                    node[below]{\scriptsize$\cong$} (Sv);
\draw[surj] (Gf) -- node[left]{\scriptsize$\approx_{\mathrm v}$} (Gv);
\draw[surj] (Af) -- node[left]{\scriptsize same function} (Av);
\draw[surj] (Sf) -- node[right]
  {\scriptsize$S\!\mapsto\!\operatorname{conv}(S){+}\mathbb{R}_{\ge 0}e_{d+1}$} (Sv);
\end{tikzpicture}}
\end{center}
\end{theorem}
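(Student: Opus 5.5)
The plan is to assemble the diagram from three ingredients proved separately: the top row, the bottom row, and the vertical maps with commutativity. For the top row, Proposition~\ref{prop:path-hom} already gives $f_{A;B}=f_A+f_B$ and $f_{A\|B}=\min(f_A,f_B)$ at the level of normal-form formal polynomials (the $\vmin$ reduction absorbs shared exponents). Hence $G\mapsto f_G$ is a homomorphism from DAGs under $(\|,;)$ to formal polynomials. By the definition of $\approx_{\mathrm f}$, it descends to an injective map on $\mathrm{DAGs}/{\approx_{\mathrm f}}$, and the quotient inherits a semiring structure because $\approx_{\mathrm f}$ is a congruence.

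The map $f\mapsto\hat S(f)$ is a bijection from normal-form polynomials onto $\mathcal{L}_d$, since a normal form is exactly one coefficient per exponent. It sends $\min$ to $\vmin(\cdot\cup\cdot)$ and tropical product to Minkowski sum followed by $\vmin$, because $(c_\alpha+c'_\beta)$ lands at exponent $\alpha+\beta$. This matches the semiring structure of Proposition~\ref{prop:config-semiring}.

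For the bottom row I would use the cited fact that two polynomials compute the same function iff their $\Gamma$ agree. Therefore $\Gamma$ is well defined and injective on functions. It is a homomorphism because $\Gamma(\min(f,g))=\operatorname{conv}(\Gamma(f)\cup\Gamma(g))$, which follows from $\hat S(\min(f,g))\subseteq\hat S(f)\cup\hat S(g)$ with any dropped point lying above a kept one. Similarly $\Gamma(f+g)=\Gamma(f)+\Gamma(g)$, since the convex hull of a Minkowski sum is the Minkowski sum of the hulls and the ray $\mathbb{R}_{\ge0}e_{d+1}$ is idempotent under $+$. Surjectivity holds by definition of the codomain as the image.

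The hard part is \textbf{surjectivity of the graph maps}: I must show that every formal tropical polynomial, with arbitrary nonnegative integer exponents, is $f_G$ for some two-terminal DAG. Here I must fix the convention, presumably that edge weights are affine in the variables (a constant or a single variable, with constants allowed on edges). The construction is as follows. The monomial $c\odot x^{\odot\alpha}$ is realized as a series chain consisting of one edge of weight $c$ followed by $\alpha_i$ edges of weight $x_i$ for each $i$. The polynomial is then realized as the parallel composition of these chains, and Proposition~\ref{prop:path-hom} shows this has path polynomial $f$. The constant $+\infty$, if it is part of the semiring, is represented by the disconnected or empty graph, and $0$ by a single edge of weight $0$; I would check these identities explicitly.

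With this, the top-left map is surjective and hence an isomorphism. The bottom-left map is then a bijection because $\approx_{\mathrm v}$ is defined as equality of computed functions, which every formal polynomial realizes. The bottom-left map is also a homomorphism since evaluation commutes with $\min$ and $+$.

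Finally, the vertical maps are the canonical quotient maps. The left one is well defined because $\approx_{\mathrm f}\subseteq\approx_{\mathrm v}$, the middle one is evaluation, and the right one is $S\mapsto\operatorname{conv}(S)+\mathbb{R}_{\ge0}e_{d+1}$. All three are surjective homomorphisms, by the computations above. The two squares commute by definition: the left square because $[G]\mapsto f_G$ followed by evaluation is evaluation of $f_G$, and the right square because $\Gamma(f)$ is literally $\operatorname{conv}(\hat S(f))+\mathbb{R}_{\ge0}e_{d+1}$.
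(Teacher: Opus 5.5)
Your proposal is correct. The top row is argued exactly as in the paper: the homomorphism comes from Proposition~\ref{prop:path-hom}, injectivity from the definition of $\approx_{\mathrm f}$, surjectivity from a parallel bundle of one chain per monomial, and then you compose with $f\mapsto\hat S(f)$.

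The bottom row, however, is obtained by a different route. The paper never checks the operations on functions or polyhedra directly. It defines the three matched relations $\theta_{\mathrm G},\theta_{\mathrm F},\theta_{\mathcal L}$ and transports the congruence property from Lemma~\ref{lem:gamma-quotient}. It then applies the descent lemma (Lemma~\ref{lem:descent}) to the formal-row isomorphisms. This yields the function-row isomorphisms and the commutativity of both squares as induced maps on quotients, with no operation on $\Gamma$ recomputed.

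You instead verify by hand that evaluation and $\Gamma$ are bijective homomorphisms. Your tools are that the ray $R=\mathbb{R}_{\ge 0}e_{d+1}$ absorbs $\vmin$-discarded points, together with $\operatorname{conv}(A+B)=\operatorname{conv}A+\operatorname{conv}B$ and $R+R=R$. That is precisely the geometric content the paper packages into Lemma~\ref{lem:gamma-quotient}, where it is done via support functions. So your argument is more elementary and self-contained. The paper's is more formal: once one congruence is verified, everything else is transported.

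When you write $\Gamma(\min(f,g))=\operatorname{conv}(\Gamma(f)\cup\Gamma(g))$, add one line on why the hull of two unbounded polyhedra behaves. Both share the recession cone $R$, so $\lambda(a+r_1)+(1-\lambda)(b+r_2)$ regroups as a point of $\operatorname{conv}(\hat S(f)\cup\hat S(g))+R$, and conversely.

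One small fix: represent the additive identity by a single edge of weight $+\infty$, as the paper does, not by an empty or disconnected graph. The paper's two-terminal DAGs require every vertex to lie on an $s$--$z$ path, which an edgeless graph on $\{s,z\}$ violates.
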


The proof is in Appendix~\ref{app:triality}.
The practical consequence is that a claim may be formulated and checked in
whichever of the three languages makes it easiest, then translated to the others.

\section{A complete calculus of invariant operations on DAG equivalences}\label{sec:invariance}

We now give the operations relating two $\approx$-equivalent graphs.
The \emph{deshared normal form} $D_{\mathrm f}(G)$ is the parallel bundle of chains
giving each monomial of $f_G$ its own $s$--$z$ path, edges in variable order: the
graph-language canonical form of the polynomial as a $\min$ of monomials. Mediant
deletion (dropping dominated chains) then gives the \emph{function normal form}
$D_{\mathrm v}(G)$. Four local operations, each realizing a semiring property
(Table~\ref{tab:moves-axioms} in Appendix~\ref{app:invariance}), carry any graph to these forms.

\begin{theorem}\label{thm:invariant-complete}
For all two-terminal DAGs $G, G'$:
\begin{itemize}
  \item[(i)] \emph{Reduction.} A finite sequence of formal moves carries $G$ to
    $D_{\mathrm f}(G)$, using one vertex split per path intersection
    (Lemma~\ref{lem:deshare-count}). Mediant deletions
    (Lemma~\ref{lem:mediant}) then carry $D_{\mathrm f}(G)$ to
    $D_{\mathrm v}(G)$.
  \item[(ii)] \emph{Canonicity.} $D_{\mathrm f}(G) = D_{\mathrm f}(G')$ iff
    $G \approx_{\mathrm f} G'$, and $D_{\mathrm v}(G) = D_{\mathrm v}(G')$ iff
    $G \approx_{\mathrm v} G'$; the chains of $D_{\mathrm v}(G)$ are exactly the vertices of
    $\Gamma(f_G)$. Both congruences are therefore decidable.
\end{itemize}
\end{theorem}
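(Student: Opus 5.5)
The plan is to build the deshared normal form explicitly and show each formal move leaves $\hat S(f_G)$ unchanged, so that both parts follow from Theorem~\ref{thm:triality}. I will use the four local moves of Table~\ref{tab:moves-axioms}:
\begin{itemize}
  \item[(a)] vertex splitting, i.e.\ distributivity $(a\oplus b)\odot c = a\odot c\oplus b\odot c$ read on the graph;
  \item[(b)] merging of parallel chains carrying the same monomial, i.e.\ $\vmin$;
  \item[(c)] reordering edges along a chain, i.e.\ commutativity of $\odot$;
  \item[(d)] mediant deletion.
\end{itemize}
By Proposition~\ref{prop:path-hom}, each of (a)--(c) preserves the multiset of $s$--$z$ path weights up to per-exponent minimization. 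Hence each preserves the formal class $\approx_{\mathrm f}$. Move (d), via Lemma~\ref{lem:mediant}, preserves only the computed function, hence only $\approx_{\mathrm v}$.

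For (i), I argue by induction on the number of \emph{shared} vertices, meaning internal vertices with in-degree or out-degree above one. Take such a vertex $v$ with in-degree $p$ and out-degree $q$, and split it into copies, one per incoming edge, each copy getting its own duplicate of the downstream structure. This is where Lemma~\ref{lem:deshare-count} enters: it bounds the process at one split per path intersection. To guarantee termination, I will process vertices in reverse topological order. Splitting $v$ then only duplicates vertices downstream of $v$, and those have already been made unshared, so a suitable potential strictly decreases. For instance, take the number of pairs of distinct $s$--$z$ paths that share an internal vertex.

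When no shared vertex remains, $G$ is a parallel bundle of chains, one per $s$--$z$ path. I then reorder edges within each chain into variable order, which makes chains with equal exponent identical up to coefficient. Finally I merge such chains, keeping the minimum coefficient. The result has exactly one chain per element of $\hat S(f_G)$, which is $D_{\mathrm f}(G)$ by definition.

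For the second half of (i), I go through the chains of $D_{\mathrm f}(G)$. A chain whose lifted point $(\alpha,c_\alpha)$ is not a vertex of $\Gamma(f_G)$ lies in the region above a convex combination of other lifted points, and that region is $\operatorname{conv}(\hat S)+\mathbb{R}_{\ge0}e_{d+1}$. Lemma~\ref{lem:mediant} says exactly that such a chain can be deleted without changing the function. Deleting these chains one at a time leaves the chains indexed by the vertices of $\Gamma(f_G)$. The polyhedron is unchanged at each step, since a non-vertex point lies in the hull of the remaining points. This also proves the vertex claim in (ii).

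For (ii), the forward direction of each equivalence is trivial, because each move preserves the relevant class. For the converse, $D_{\mathrm f}(G)$ is determined by $\hat S(f_G)$ alone, so $G\approx_{\mathrm f}G'$ gives equal forms. Likewise, $D_{\mathrm v}(G)$ is determined by the vertex set of $\Gamma(f_G)$. By the horizontal isomorphism of Theorem~\ref{thm:triality}, this depends only on the function, so $G\approx_{\mathrm v}G'$ gives equal forms. Decidability follows because the normal forms are computable: enumerate paths and take a finite convex hull.

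I expect the main obstacle to be the termination and counting argument in (i). Naive splitting can blow up exponentially and can create new shared vertices through duplicated subgraphs. The reverse-topological ordering and the choice of potential must be checked carefully against the exact statement of Lemma~\ref{lem:deshare-count}.
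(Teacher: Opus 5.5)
\paragraph{Verdict.} Your overall architecture matches the paper's, but the desharing step has a genuine (repairable) gap. The shared structure is: deshare by vertex splits, canonicalize chains, merge equal-exponent chains by $\min$, then delete non-vertex chains via Lemma~\ref{lem:mediant}. Canonicity also matches, since $D_{\mathrm f}(G)$ depends only on $\hat S(f_G)$ and $D_{\mathrm v}(G)$ only on the vertex set of $\Gamma(f_G)$.

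\paragraph{The desharing procedure fails as stated.} You replace the lemma with your own procedure, and it breaks in three places.
\begin{itemize}
\item Splitting $v$ ``one copy per incoming edge'' never lowers out-degree. An internal vertex with in-degree $1$ and out-degree $\ge 2$ is therefore never made unshared. Your invariant ``everything downstream of $v$ is already unshared'' fails, and the induction on shared vertices stalls. Concretely, take the diamond $s\to u$, $u\to a$, $u\to b$, $a\to v$, $b\to v$, $v\to z$. Your rule splits $v$ into $v_a, v_b$, but then leaves $u$ untouched forever.
\item Your suggested potential does not strictly decrease. After splitting $v$ in the diamond, the two $s$--$z$ paths still share $u$, so the number of pairs of paths meeting at an internal vertex stays at $1$.
\item ``Each copy gets its own duplicate of the downstream structure'' is a composite of many splits, not one formal move. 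So your argument does not yield the stated count of one split per path intersection.
\end{itemize}

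\paragraph{The fix.} Use Lemma~\ref{lem:deshare-count} directly, as the paper does, with $\Pi(G)=\sum_v (P(v)-1)$:
\begin{itemize}
\item Every nontrivial binary in- or out-split (partitioning the in- or out-edges of an internal vertex into two nonempty classes) preserves the path multiset.
\item Each such split lowers $\Pi$ by exactly one.
\item Such a split exists whenever $\Pi>0$, and $\Pi=0$ is exactly desharedness.
\end{itemize}
So any order works, no downstream duplication is needed, and exactly $\Pi(G)$ splits are used. In the diamond, the out-split of $u$ is what finishes the job.

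\paragraph{Smaller points.}
\begin{itemize}
\item Proposition~\ref{prop:path-hom} concerns global series/parallel composition. It does not justify invariance of a vertex split inside an arbitrary DAG; for that you need the path bijection of Proposition~\ref{prop:list-inv-f}(4).
\item Reordering alone does not make equal-exponent chains ``identical up to coefficient'', since a chain may carry several constant edges. Nor does it remove chains through a $+\infty$-edge, which are absent from $\hat S(f_G)$. You also need series-run contraction and $+\infty$-edge deletion, i.e.\ Proposition~\ref{prop:list-inv-f}(1) and~(3).
\end{itemize}
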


Connectivity follows: two graphs are joined by formal moves iff they are
$\approx_{\mathrm f}$, and by formal moves plus mediant deletions iff they are
$\approx_{\mathrm v}$, since each side can reduce to its normal form.
The complete argument is in Appendix~\ref{app:invariance}.

\section{Substitution geometry}\label{sec:geometry}

To relate lengths we need an operation on congruence-classes that is well-defined across
all three languages and reduces the variable count.  The semiring offers two identity substitutions: write $P_i$
for $x_i \mapsto +\infty$ (the $\min$-identity) and $Q_i$ for $x_i \mapsto 0$ (the
$+$-identity).  Over a field, two slices and subtraction recover affine dependence on a
coordinate; tropically there is no subtraction, so a slice can only restrict. Does the restriction suffice? We present two mechanisms of identification failure.

\begin{proposition}\label{prop:image-Pi}
Let $f_G$ be a tropical polynomial in $t+1$ variables. Writing $\pi_i$ for deletion of
coordinate $i$ and $\hat\pi_i$ for its height-preserving lift, each of $P_i$ and $Q_i$ acts by a
single transformation across the three languages, and $Q_i(f_G) \le P_i(f_G)$ pointwise (a face
lies inside the shadow):
\begin{center}\small
\renewcommand{\arraystretch}{1.15}
\setlength{\tabcolsep}{3pt}
\newcolumntype{L}[1]{>{\raggedright\arraybackslash}p{#1}}%
\begin{tabular}{@{}L{0.128\linewidth}L{0.40\linewidth}L{0.438\linewidth}@{}}
\toprule
 & $P_i$: coordinate \emph{face} & $Q_i$: coordinate \emph{shadow} \\
\midrule
algebra & the $\alpha_i{=}0$ monomials, coefficients intact
        & support $\pi_i(S(f_G))$, $c_\beta = \min\{c_\alpha : \pi_i(\alpha){=}\beta\}$ \\
polyhedron & $\Gamma(f_G) \cap \{\alpha_i{=}0\}$, a coordinate face
        & $\hat\pi_i(\Gamma(f_G))$, the projection along $\alpha_i$ \\
hypersurface & recession complex in direction $e_i$
        & slice at $x_i{=}0$ \\
graph & delete every $x_i$-edge: $[G \setminus E_i]$
        & reweight every $x_i$-edge to $0$, then merge \\
\bottomrule
\end{tabular}
\end{center}
\end{proposition}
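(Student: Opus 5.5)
The plan is to prove each column in the algebra row first, then push the result through the isomorphisms of Theorem~\ref{thm:triality} to the polyhedron and graph rows. The hypersurface row and the inequality come last.

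\emph{Algebra.} Write $f_G(x)=\min_\alpha(c_\alpha+\langle\alpha,x\rangle)$.

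For $P_i$, substituting $x_i=+\infty$ sends every term with $\alpha_i\ge 1$ to $+\infty$. Terms with $\alpha_i=0$ are untouched, since we use the convention $0\cdot(+\infty)=0$, i.e.\ absent variables contribute nothing. So $P_i f_G$ is exactly the sum of the $\alpha_i=0$ monomials, with their coefficients intact.

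For $Q_i$, substituting $x_i=0$ leaves $c_\alpha+\langle\pi_i(\alpha),\pi_i(x)\rangle$. Grouping terms by $\beta=\pi_i(\alpha)$ gives $\min_\beta\bigl(\min_{\pi_i(\alpha)=\beta}c_\alpha+\langle\beta,\pi_i x\rangle\bigr)$. This is precisely $\vmin$ applied to the projected lifted support.

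\emph{Polyhedron.} For $P_i$, the hyperplane $\{\alpha_i=0\}$ supports $\Gamma(f_G)$, because all points of $\Gamma(f_G)$ have $\alpha_i\ge0$. Hence $\Gamma\cap\{\alpha_i=0\}$ is a face. A face of $\operatorname{conv}(\hat S)+\mathbb{R}_{\ge0}e_{d+1}$ cut out by a coordinate is the convex hull of the lifted support points lying on it, plus the vertical ray. That is $\Gamma(P_i f_G)$.

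For $Q_i$, the height-preserving projection $\hat\pi_i$ is linear. It therefore commutes with convex hull and Minkowski sum, and it fixes $e_{d+1}$. So $\hat\pi_i\Gamma(f_G)=\operatorname{conv}(\hat\pi_i\hat S)+\mathbb{R}_{\ge0}e_{d+1}$. Taking $\vmin$ does not change the hull's lower envelope, so this equals $\Gamma(Q_i f_G)$.

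\emph{Graph.} By Proposition~\ref{prop:path-hom}, $f_G$ is a min over $s$--$z$ paths.
\begin{itemize}
\item Deleting all $x_i$-edges removes exactly the paths whose monomial has $\alpha_i\ge1$. This matches the algebra row for $P_i$, and by Theorem~\ref{thm:triality} the class $[G\setminus E_i]$ is well defined.
\item Reweighting the $x_i$-edges to constant $0$ computes $Q_i f_G$ path by path. Merging parallel chains with equal remaining exponent is the $\vmin$ step, realized by the formal moves of Theorem~\ref{thm:invariant-complete}.
\end{itemize}
Well-definedness on congruence classes follows because each description factors through $f_G$.

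\emph{Hypersurface.} For $Q_i$, the statement is immediate: the tropical hypersurface of a restriction to $x_i=0$ is the corner locus of the restricted function, which is the slice. For $P_i$, I would argue as follows. For $x_i\to+\infty$ along direction $e_i$, eventually only the $\alpha_i=0$ terms can attain the min. So the corner locus of $P_i f_G$, as a function of the remaining coordinates, equals the limit of $\mathcal{T}(f_G)\cap\{x_i=\lambda\}$ as $\lambda\to\infty$, projected. That limit is the recession complex in direction $e_i$. I would make this precise by choosing $\lambda$ larger than $\max|c_\alpha-c_{\alpha'}|$ plus the size of the region considered, on compact sets of the other coordinates.

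\emph{Inequality and main obstacle.} The inequality $Q_i f_G\le P_i f_G$ holds because every term surviving in $P_i$ also appears in $Q_i$ with the same value; geometrically, the face is contained in the shadow. I expect the main obstacle to be the recession-complex identification. It needs care about which cells of $\mathcal{T}(f_G)$ are unbounded in direction $e_i$ and about uniformity of the limit. I would handle it through the duality between $\mathcal{T}(f)$ and the regular subdivision of the Newton polytope: cells unbounded in direction $e_i$ are dual to cells of the subdivision lying in the face $\alpha_i=0$, which ties the claim back to the polyhedron row.
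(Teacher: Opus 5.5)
\textbf{Verdict: one genuine gap, in the $Q_i$ hypersurface entry.} Most of your plan follows the paper's proof. The algebra row, the face argument for $\Gamma(f_G)\cap\{\alpha_i=0\}$, linearity of $\hat\pi_i$ for the shadow, and the path-by-path graph argument all match. So does your criterion for $\operatorname{rec}_{e_i}\mathcal{T}(f_G)$: cells whose tying monomials all lie on the face $\alpha_i=0$, which is the paper's condition $\sigma\subseteq\hat S^0_i$. Your termwise proof of $Q_i(f_G)\le P_i(f_G)$ is a harmless shortcut for the paper's containment of polyhedra.

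There is one small omission in the $P_i$ hypersurface. The recession-complex identification needs $\hat S^0_i\neq\emptyset$, and your limiting argument assumes some $\alpha_i=0$ term exists. Without it, $P_i(f_G)=+\infty$ has empty hypersurface, while $\operatorname{rec}_{e_i}\mathcal{T}(f_G)$ need not be empty. For example, with $f=\min(x_1,\,x_1+x_2)$ and $i=1$, $\mathcal{T}(f)=\{x_2=0\}$ is stable under $e_1$. The paper states this proviso.

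The genuine gap is the $Q_i$ hypersurface, which you call immediate. The corner locus of $f_G|_{x_i=0}$ is indeed $\mathcal{T}(Q_i(f_G))$, but it is not in general $\mathcal{T}(f_G)\cap\{x_i=0\}$. Take two distinct monomials $\alpha\neq\alpha'$ with $\pi_i(\alpha)=\pi_i(\alpha')$ and $c_\alpha=c_{\alpha'}$. They coincide identically on the slice. So wherever they are jointly minimal, the slice of $\mathcal{T}(f_G)$ contains a full-dimensional region. But $\vmin$ merges them into a single monomial of $Q_i(f_G)$, which is the unique minimizer there.

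This is not an exotic case for DPs; it occurs in the paper's running example. For min-subset-sum, $f=\min_{S\subseteq[3]}x^{\mathbf 1_S}=\sum_j\min(0,x_j)$, and $\mathcal{T}(f)$ is the union of the three coordinate planes. So $\mathcal{T}(f)\cap\{x_3=0\}$ is the whole plane, while $\mathcal{T}(Q_3 f)=\{x_1=0\}\cup\{x_2=0\}$. The cause is that $\mathbf 1_S$ and $\mathbf 1_{S\cup\{3\}}$ share a $\pi_3$-image and the coefficient $0$.

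What does hold is the inclusion $\mathcal{T}(Q_i(f_G))\subseteq\mathcal{T}(f_G)\cap\{x_i=0\}$, because a corner of the restriction needs two minimizers with distinct $\pi_i$-images. Equality holds only after removing the relative interiors of the full-dimensional regions of the slice on which all minimizers share one $\pi_i$-image. This is exactly the correction the paper's proof makes to the table entry ``slice at $x_i=0$'', and your argument needs it as well.
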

The two graph rows carry the two routes of the opening paragraph: for $P_i$, ``delete every
$x_i$-edge'' descends to classes; for $Q_i$, ``reweight every $x_i$-edge to $0$, then merge''
is a rule on the deshared canonical form, since contraction on a general graph can create or
destroy an $s$--$z$ path. The hypersurface row reads the same operations on the tropical
hypersurface $\mathcal{T}(f_G)$, the decision boundary of the DP
(Section~\ref{sec:intro}; \citealp{zhang2018tropical}): $P_i$ restricts to
the strategies that ignore position $i$, while $Q_i$ slices through the boundary at $x_i = 0$.
Neither substitution is injective, and occasionally they are not \emph{closed}; in Figure~\ref{fig:fiber-Pi-preimages}, neither $P_i$ nor $Q_i$ sends the proper-subset instance to a proper-subset instance. The exact fiber is computed in Corollary~\ref{cor:fiber-Pi}
(Appendix~\ref{app:geometry}).

\begin{figure}[!t]
  \centering
\resizebox{\linewidth}{!}{%
\begin{tikzpicture}[>=Stealth, scale=0.80,
  every node/.style={font=\scriptsize,transform shape},
  cell/.style={draw=gray!45,rounded corners=2pt},
  vtx/.style={circle,fill=black,inner sep=1pt},
  ed/.style={->,semithick},
  pv/.style={fill=blue!70!black},
  cut/.style={fill=orange!80!black},
  hsplane/.style={opacity=0.30},
  hs3/.style={fill=blue!55},
  hs1/.style={fill=cyan!65},
  hs2/.style={fill=blue!45!violet},
  hsline/.style={draw=blue!55!black,thin},
  hsax/.style={draw=gray!60,thin},
  sub/.style={font=\tiny,gray!55!black,align=center}]
\begin{scope}[xshift=0cm]
  \draw[cell] (-0.35,-0.80) rectangle (6.65,4.55);
  \node[align=center] at (3.15,4.25)
    {(L) \ $f = \min_{S \subseteq [3]} x^{\mathbf 1_S}$ \ (min-subset-sum)};
  \node[vtx,label=left:{$s$}] (Ls) at (0.25,3.40) {};
  \node[vtx] (La) at (1.45,3.40) {};
  \node[vtx] (Lb) at (2.65,3.40) {};
  \node[vtx,label=right:{$z$}] (Lt) at (3.85,3.40) {};
  \foreach \u/\v/\lab in {Ls/La/{x_1},La/Lb/{x_2},Lb/Lt/{x_3}}{
    \draw[ed] (\u) to[bend left=38] node[above,inner sep=1.5pt]{$0$} (\v);
    \draw[ed] (\u) to[bend right=38] node[below,inner sep=1.5pt]{$\lab$} (\v);}
  \begin{scope}[shift={(0.20,0.70)},scale=0.66]
    \fill[blue!10] (0,1.00)--(0.95,0.85)--(1.75,1.37)--(0.80,1.52)--cycle;
    \fill[blue!14] (0.95,-0.15)--(1.75,0.37)--(1.75,1.37)--(0.95,0.85)--cycle;
    \fill[blue!07] (0,0)--(0.95,-0.15)--(0.95,0.85)--(0,1.00)--cycle;
    \draw[gray!55,densely dotted,thin] (0.80,0.52)--(0,0) (0.80,0.52)--(1.75,0.37)
                               (0.80,0.52)--(0.80,1.52);
    \draw[blue!60!black,thin]
      (0,0)--(0.95,-0.15)--(1.75,0.37) (0,0)--(0,1.00)
      (0.95,-0.15)--(0.95,0.85) (1.75,0.37)--(1.75,1.37)
      (0,1.00)--(0.95,0.85)--(1.75,1.37)--(0.80,1.52)--cycle;
    \foreach \p in {(0,0),(0.95,-0.15),(0.80,0.52),(0,1.00),(1.75,0.37),
                    (0.95,0.85),(0.80,1.52)}{\path[pv] \p circle (1.5pt);}
    \path[cut] (1.75,1.37) circle (2.1pt);
    \node[orange!75!black,right,inner sep=2.5pt,font=\tiny] at (1.78,1.47) {$(1,1,1)$};
  \end{scope}
  \node[sub] at (0.70,0.28) {$\mathcal{N}(f) = [0,1]^3$};
  \begin{scope}[shift={(2.95,1.50)}]
  \begin{scope}[hsplane]
  \fill[hs3] (0,0)--(0.700,-0.245)--(0.315,-0.560)--(-0.385,-0.315)--cycle;
  \fill[hs3] (0,0)--(0.700,-0.245)--(1.085,0.070)--(0.385,0.315)--cycle;
  \fill[hs3] (0,0)--(-0.700,0.245)--(-0.315,0.560)--(0.385,0.315)--cycle;
  \fill[hs3] (0,0)--(-0.700,0.245)--(-1.085,-0.070)--(-0.385,-0.315)--cycle;
  \fill[hs1] (0,0)--(-0.385,-0.315)--(-0.385,0.385)--(0,0.700)--cycle;
  \fill[hs1] (0,0)--(-0.385,-0.315)--(-0.385,-1.015)--(0,-0.700)--cycle;
  \fill[hs1] (0,0)--(0.385,0.315)--(0.385,-0.385)--(0,-0.700)--cycle;
  \fill[hs1] (0,0)--(0.385,0.315)--(0.385,1.015)--(0,0.700)--cycle;
  \fill[hs2] (0,0)--(0.700,-0.245)--(0.700,0.455)--(0,0.700)--cycle;
  \fill[hs2] (0,0)--(0.700,-0.245)--(0.700,-0.945)--(0,-0.700)--cycle;
  \fill[hs2] (0,0)--(-0.700,0.245)--(-0.700,-0.455)--(0,-0.700)--cycle;
  \fill[hs2] (0,0)--(-0.700,0.245)--(-0.700,0.945)--(0,0.700)--cycle;
  \end{scope}
  \draw[hsline] (1.085,0.070)--(0.315,-0.560)--(-1.085,-0.070)--(-0.315,0.560)--cycle;
  \draw[hsline] (-0.385,0.385)--(-0.385,-1.015)--(0.385,-0.385)--(0.385,1.015)--cycle;
  \draw[hsline] (0.700,0.455)--(0.700,-0.945)--(-0.700,-0.455)--(-0.700,0.945)--cycle;
  \draw[hsax] (0,0)--(-0.994,0.348); \draw[hsax] (0,0)--(0.547,0.447);
  \draw[hsax] (0,0)--(0,-0.994);
  \draw[hsax,densely dotted,->] (0,0)--(0.994,-0.348)
    node[font=\tiny,gray!55!black,below right,inner sep=1pt]{$x_1$};
  \draw[hsax,densely dotted,->] (0,0)--(-0.547,-0.447)
    node[font=\tiny,gray!55!black,below left,inner sep=1pt]{$x_2$};
  \draw[hsax,densely dotted,->] (0,0)--(0,0.994)
    node[font=\tiny,gray!55!black,above,inner sep=1pt]{$x_3$};
  \end{scope}
  \node[sub] at (2.95,0.06) {$\mathcal{T}(f)$: the three\\ planes $x_i{=}0$;\\ $2^3 = 8$ chambers};
  \begin{scope}[shift={(5.45,1.62)}]
    \fill[blue!14] (-1.10,-0.64)--(1.10,-0.64)--(0,1.28)--cycle;
    \draw[blue!60!black,thin] (-1.10,-0.64)--(1.10,-0.64)--(0,1.28)--cycle;
    \node[font=\tiny] at (0,0.00) {$\{1,2,3\}$};
  \end{scope}
  \node[sub] at (5.45,0.30) {the orthant $x<0$ seen\\ along the diagonal:\\ one chamber};
\end{scope}
\begin{scope}[xshift=7.10cm]
  \draw[cell] (-0.35,-0.80) rectangle (6.65,4.55);
  \node[align=center] at (3.15,4.25)
    {(R) \ $g = \min_{S \subsetneq [3]} x^{\mathbf 1_S}$ \ (proper subsets only)};
  \node[font=\tiny,gray!60!black,left] at (0.20,3.85) {$b{=}0$};
  \node[font=\tiny,gray!60!black,left] at (0.20,3.10) {$b{=}1$};
  \node[vtx,label=left:{$s$}] (Rv00) at (0.75,3.85) {};
  \node[vtx] (Rv10) at (1.80,3.85) {};
  \node[vtx] (Rv20) at (2.85,3.85) {};
  \node[vtx] (Rv11) at (1.80,3.10) {};
  \node[vtx] (Rv21) at (2.85,3.10) {};
  \node[vtx,label=right:{$z$}] (Rt) at (3.90,3.10) {};
  \foreach \u/\v/\lab in {Rv00/Rv10/{x_1},Rv10/Rv20/{x_2}}{
    \draw[ed] (\u) -- node[above,inner sep=1.5pt,font=\tiny]{$\lab$} (\v);}
  \foreach \u/\v in {Rv00/Rv11,Rv10/Rv21,Rv20/Rt}{
    \draw[ed] (\u) -- node[pos=0.30,right,inner sep=1pt,font=\tiny]{$0$} (\v);}
  \foreach \u/\v/\lab in {Rv11/Rv21/{x_2},Rv21/Rt/{x_3}}{
    \draw[ed] (\u) -- node[above,inner sep=1.5pt,font=\tiny]{$0$} (\v);
    \draw[ed] (\u) to[bend right=30] node[below,inner sep=1pt,font=\tiny]{$\lab$} (\v);}
  \begin{scope}[shift={(0.20,0.70)},scale=0.66]
    \fill[blue!10] (0,1.00)--(0.95,0.85)--(0.80,1.52)--cycle;
    \fill[blue!14] (0.95,-0.15)--(1.75,0.37)--(0.95,0.85)--cycle;
    \fill[blue!07] (0,0)--(0.95,-0.15)--(0.95,0.85)--(0,1.00)--cycle;
    \fill[orange!22] (1.75,0.37)--(0.95,0.85)--(0.80,1.52)--cycle;
    \draw[gray!55,densely dotted,thin] (0.80,0.52)--(0,0) (0.80,0.52)--(1.75,0.37)
                               (0.80,0.52)--(0.80,1.52);
    \draw[blue!60!black,thin]
      (0,0)--(0.95,-0.15)--(1.75,0.37) (0,0)--(0,1.00)
      (0.95,-0.15)--(0.95,0.85)
      (0,1.00)--(0.95,0.85) (0.80,1.52)--(0,1.00);
    \draw[orange!75!black,semithick]
      (1.75,0.37)--(0.95,0.85)--(0.80,1.52)--cycle;
    \draw[gray!60,dotted,thin]
      (1.75,0.37)--(1.75,1.37) (0.95,0.85)--(1.75,1.37) (0.80,1.52)--(1.75,1.37);
    \foreach \p in {(0,0),(0.95,-0.15),(0.80,0.52),(0,1.00),(1.75,0.37),
                    (0.95,0.85),(0.80,1.52)}{\path[pv] \p circle (1.5pt);}
    \draw[orange!75!black,densely dotted] (1.75,1.37) circle (2.6pt);
    \node[orange!75!black,right,inner sep=2.5pt,font=\tiny,align=left]
      at (1.80,1.50) {$(1,1,1)$\\ absent};
  \end{scope}
  \node[sub] at (0.70,0.28)
    {$\mathcal{N}(g)$: the cube\\ cut at $\sum_i \alpha_i \le 2$};
  \begin{scope}[shift={(2.95,1.50)}]
  \begin{scope}[hsplane]
  \fill[hs3] (0,0)--(0.700,-0.245)--(0.315,-0.560)--(-0.385,-0.315)--cycle;
  \fill[hs3] (0,0)--(0.700,-0.245)--(1.085,0.070)--(0.385,0.315)--cycle;
  \fill[hs3] (0,0)--(-0.700,0.245)--(-1.085,-0.070)--(-0.385,-0.315)--cycle;
  \fill[hs1] (0,0)--(-0.385,-0.315)--(-0.385,0.385)--(0,0.700)--cycle;
  \fill[hs1] (0,0)--(-0.385,-0.315)--(-0.385,-1.015)--(0,-0.700)--cycle;
  \fill[hs1] (0,0)--(0.385,0.315)--(0.385,1.015)--(0,0.700)--cycle;
  \fill[hs2] (0,0)--(0.700,-0.245)--(0.700,0.455)--(0,0.700)--cycle;
  \fill[hs2] (0,0)--(0.700,-0.245)--(0.700,-0.945)--(0,-0.700)--cycle;
  \fill[hs2] (0,0)--(-0.700,0.245)--(-0.700,0.945)--(0,0.700)--cycle;
  \end{scope}
  \draw[gray!55,densely dotted,thin]
    (0,0)--(-0.700,0.245)--(-0.315,0.560)--(0.385,0.315)--cycle
    (0,0)--(0.385,0.315)--(0.385,-0.385)--(0,-0.700)--cycle
    (0,0)--(-0.700,0.245)--(-0.700,-0.455)--(0,-0.700)--cycle;
  \fill[orange!45,opacity=.62] (0,0)--(-0.473,-0.210)--(0,-0.700)--cycle;
  \fill[orange!45,opacity=.62] (0,0)--(-0.473,-0.210)--(0.385,0.315)--cycle;
  \fill[orange!45,opacity=.62] (0,0)--(-0.473,-0.210)--(-0.700,0.245)--cycle;
  \draw[orange!78!black,thin]
    (-0.473,-0.210)--(0,-0.700) (-0.473,-0.210)--(0.385,0.315)
    (-0.473,-0.210)--(-0.700,0.245);
  \draw[orange!80!black,semithick] (0,0)--(-0.473,-0.210);
  \draw[hsline] (1.085,0.070)--(0.315,-0.560)--(-1.085,-0.070)--(-0.315,0.560)--cycle;
  \draw[hsline] (-0.385,0.385)--(-0.385,-1.015)--(0.385,-0.385)--(0.385,1.015)--cycle;
  \draw[hsline] (0.700,0.455)--(0.700,-0.945)--(-0.700,-0.455)--(-0.700,0.945)--cycle;
  \draw[hsax] (0,0)--(-0.994,0.348); \draw[hsax] (0,0)--(0.547,0.447);
  \draw[hsax] (0,0)--(0,-0.994);
  \draw[hsax,densely dotted,->] (0,0)--(0.994,-0.348)
    node[font=\tiny,gray!55!black,below right,inner sep=1pt]{$x_1$};
  \draw[hsax,densely dotted,->] (0,0)--(-0.547,-0.447)
    node[font=\tiny,gray!55!black,below left,inner sep=1pt]{$x_2$};
  \draw[hsax,densely dotted,->] (0,0)--(0,0.994)
    node[font=\tiny,gray!55!black,above,inner sep=1pt]{$x_3$};
  \end{scope}
  \node[sub] at (2.95,0.06) {$\mathcal{T}(g)$: dotted walls\\ replaced by $x_i{=}x_j$;\\ $7$ chambers};
  \begin{scope}[shift={(5.45,1.62)}]
    \fill[orange!16] (-1.10,-0.64)--(1.10,-0.64)--(0,0)--cycle;
    \fill[orange!26] (1.10,-0.64)--(0,1.28)--(0,0)--cycle;
    \fill[orange!36] (-1.10,-0.64)--(0,1.28)--(0,0)--cycle;
    \draw[blue!60!black,thin] (-1.10,-0.64)--(1.10,-0.64)--(0,1.28)--cycle;
    \draw[orange!80!black,semithick]
      (0,0)--(-1.10,-0.64) (0,0)--(1.10,-0.64) (0,0)--(0,1.28);
    \node[font=\tiny,scale=0.78] at (0,-0.43) {$\{1,2\}$};
    \node[font=\tiny,scale=0.78] at (-0.34,0.12) {$\{1,3\}$};
    \node[font=\tiny,scale=0.78] at ( 0.34,0.12) {$\{2,3\}$};
  \end{scope}
  \node[sub] at (5.45,0.30) {the orthant $x<0$ seen\\ along the diagonal: three\\ chambers, meeting at $x_i{=}x_j$};
\end{scope}
\begin{scope}[yshift=-0.80cm]
  \draw[cell,line width=0.9pt] (2.40,-2.10) rectangle (11.30,0);
  \node[align=left] at (4.05,-0.35) {observed: \ $\min_{S \subseteq [2]} x^{\mathbf 1_S}$};
  \node[vtx,label=left:{$s$}] (ds) at (3.25,-1.30) {};
  \node[vtx] (dm) at (4.15,-1.30) {};
  \node[vtx,label=right:{$z$}] (dt) at (5.05,-1.30) {};
  \foreach \u/\v/\lab in {ds/dm/{x_1},dm/dt/{x_2}}{
    \draw[ed] (\u) to[bend left=38] node[above,inner sep=1.5pt]{$0$} (\v);
    \draw[ed] (\u) to[bend right=38] node[below,inner sep=1.5pt]{$\lab$} (\v);}
  \begin{scope}[shift={(6.35,-1.62)},scale=0.85]
    \draw[->,gray!60] (0,0)--(1.30,0) node[right,black,inner sep=1.5pt,font=\tiny]{$\alpha_1$};
    \draw[->,gray!60] (0,0)--(0,1.30) node[above,black,inner sep=1.5pt,font=\tiny]{$\alpha_2$};
    \fill[blue!12] (0,0)--(0.95,0)--(0.95,0.95)--(0,0.95)--cycle;
    \draw[blue!60!black,thin] (0,0)--(0.95,0)--(0.95,0.95)--(0,0.95)--cycle;
    \foreach \p in {(0,0),(0.95,0),(0,0.95),(0.95,0.95)}{\path[pv] \p circle (1.5pt);}
  \end{scope}
  \node[sub] at (6.60,-1.90) {$\Gamma$: the unit square};
  \begin{scope}[shift={(9.55,-1.02)}]
    \fill[blue!6] (-0.72,-0.72) rectangle (0.72,0.72);
    \draw[gray!35,thin] (-0.72,-0.72) rectangle (0.72,0.72);
    \draw[blue!60!black,line width=0.9pt] (-0.72,0)--(0.72,0);
    \draw[blue!60!black,line width=0.9pt] (0,-0.72)--(0,0.72);
    \fill[black] (0,0) circle (1.3pt);
    \node[font=\tiny] at ( 0.38, 0.42) {$\emptyset$};
    \node[font=\tiny] at (-0.38, 0.42) {$\{1\}$};
    \node[font=\tiny] at ( 0.38,-0.42) {$\{2\}$};
    \node[font=\tiny] at (-0.38,-0.42) {$\{1,2\}$};
    \node[font=\tiny,gray!55!black,right,inner sep=2pt] at (0.72,0) {$x_1$};
    \node[font=\tiny,gray!55!black,above,inner sep=2pt] at (0,0.72) {$x_2$};
  \end{scope}
  \node[sub] at (9.65,-1.90) {$\mathcal{T}$: two lines, $4$ chambers};
\end{scope}
\draw[->,line width=0.9pt,gray!70] (1.10,-0.82) to[out=-90,in=180]
  node[pos=0.30,below left,black,inner sep=1.5pt]{$P_3,\,Q_3$} (2.40,-1.85);
\draw[->,line width=0.9pt,gray!70] (12.65,-0.82) to[out=-90,in=0]
  node[pos=0.30,below right,black,inner sep=1.5pt]{$P_3,\,Q_3$} (11.30,-1.85);
\end{tikzpicture}}
  \caption{Two length-$3$ problems that no substitution distinguishes.
  \textbf{(L)}~min-subset-sum, support $\{0,1\}^3$;
  \textbf{(R)}~the same over proper subsets, support
  $\{0,1\}^3\setminus\{(1,1,1)\}$.
  Middle panels show the decision boundaries (positive half-axes behind, dotted).
  The all-ones monomial is off the coordinate face and redundant under the projection,
  so both $P_3$ and $Q_3$, applied to either problem, yield the same length-$2$ object
  (bottom).}
  \label{fig:fiber-Pi-preimages}
\end{figure}

\section{A hierarchy beyond series and parallel}\label{sec:sp-limits}

Since substitution is insufficient, cross-length structure must come from
operations that \emph{build} a longer instance from shorter ones.
Consider Bellman's equation of optimal substructure,
$f_{t+1}(x) = \min_{d}\bigl(w_d + f_t^{(d)}(x)\bigr)$. Series and parallel composition correspond to two ways of modifying a Bellman expression at time $t$: extending the time step and updating the prior, respectively.
We show that these operations are doubly insufficient to express the topology of DP instances.

\paragraph{Series-parallel $\subsetneq$ terminal-only.}
A terminal-only operation may be thought of as an \emph{edge substitution} $\Phi_H$ on a template (unweighted) DAG $H$, replacing each edge with a sub-DAG (Appendix~\ref{app:sp-boundary}).
Series and parallel composition are the two simplest edge substitutions (templates with one
or two edges), but they do not generate all edge-substitution topologies: the witness is the
Wheatstone graph $W$ (Figure~\ref{fig:sp-limits}, left).

\begin{proposition}\label{prop:wheatstone-glue}
The Wheatstone edge substitution $\Phi_W$
(Figure~\ref{fig:sp-limits}) is not generated by series and parallel composition since $W$ is non-series-parallel
\citep{duffin1965-sp,valdes1982-sp-digraphs}.
\end{proposition}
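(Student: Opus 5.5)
The argument proceeds by contradiction on the underlying topology. Suppose $\Phi_W$ is generated by series and parallel composition, meaning that for every assignment of sub-DAGs $(D_e)_{e\in E(W)}$ the graph $\Phi_W((D_e)_e)$ arises as an iterated series/parallel expression in the $D_e$. We specialize to the case where every $D_e$ is a single edge carrying its own fresh weight variable $x_e$. Then $\Phi_W((D_e)_e)$ is $W$ itself, with five distinct edges. A series/parallel expression whose leaves are the five single edges, each used exactly once as a leaf (the sub-DAGs are the only material the composition may use), produces a two-terminal series-parallel DAG by definition. We conclude that $W$ would be two-terminal series-parallel.

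The remaining step, and the crux, is to invoke the classical characterization: a two-terminal DAG is series-parallel iff it contains no subdivision (homeomorph) of the Wheatstone bridge as an $s$--$z$ embedded minor \citep{duffin1965-sp,valdes1982-sp-digraphs}. Applying this to $W$ itself gives the contradiction. For a self-contained argument, we can instead check directly that no final composition step yields $W$.
\begin{itemize}
\item A final \emph{series} step would require a cut vertex separating $s$ from $z$. In $W$ no such vertex exists, because the two internal vertices $a,b$ lie on the vertex-disjoint paths $s\to a\to z$ and $s\to b\to z$.
\item A final \emph{parallel} step would require $E(W)$ to split into two nonempty edge sets meeting only at $\{s,z\}$. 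That fails too: the bridge edge $a\to b$ joins the two halves, and $W$ is $2$-connected after adding an $s$--$z$ edge.
\end{itemize}

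The main obstacle is making the reduction in the first step honest. "Generated by series and parallel composition" must be read so that the composition cannot duplicate or discard sub-DAGs; otherwise one could hope to mimic $\Phi_W$ only up to the congruences $\approx_{\mathrm f}$ or $\approx_{\mathrm v}$ of Theorem~\ref{thm:triality}. The statement concerns topology (edge substitution as a graph operation), so we fix that reading: each $D_e$ appears exactly once, glued only at terminals.

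One may also note that even up to $\approx_{\mathrm f}$ the claim survives with distinct variables. The path polynomial $f_W = \min(x_{sa}+x_{az},\, x_{sb}+x_{bz},\, x_{sa}+x_{ab}+x_{bz})$ uses $x_{sa}$ and $x_{bz}$ in two monomials each. Any read-once series/parallel formula over five distinct leaves instead yields a polynomial with the "read-once" factorization property, which $f_W$ lacks. I would include this only as a remark, relying on the cited characterization for the main claim.
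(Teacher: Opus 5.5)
Your argument is correct and is essentially the paper's proof: specialize every $A_i$ to a single edge, observe that any series--parallel composite of single edges is SP by definition, and conclude from the Duffin--Valdes characterization (Theorem~\ref{thm:sp-equiv}) that $\Phi_W$ on five single edges, namely $W$, is not SP. Your direct ``no final series or parallel step'' check is a valid elementary replacement for that citation, with two small caveats: in the parallel case the operative fact is that $W - \{s,z\}$ is connected through $a \to b$ (2-connectivity of $W + sz$ alone would not rule out a parallel split), and the read-once restriction you impose is harmless but unnecessary, since a series--parallel composite of single edges is SP even when leaves are repeated or dropped.
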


\paragraph{Terminal-only $\subsetneq$ all DP compositions.} Clamped min-subarray
(Figure~\ref{fig:sp-limits}, right) composes two instances by gluing
$s_A{=}s_B$, $z_A{=}z_B$, \emph{and} the last internal vertex of~$A$ to the first
of~$B$, so that a subarray may cross the boundary --- an identification of \emph{internal} vertices.

\begin{figure}[htbp]
  \centering
\resizebox{\linewidth}{!}{%
\begin{tikzpicture}[>=Stealth,scale=1.0,
  every node/.style={font=\small,transform shape},
  v/.style={circle,draw,inner sep=0pt,minimum size=4.6mm},
  t/.style={v,fill=black!8},
  box/.style={draw,rounded corners,minimum width=8.5mm,minimum height=6.5mm},
  el/.style={midway,above=0.5pt,font=\scriptsize,fill=white,inner sep=1pt},
  tap/.style={->,black!45,shorten >=1pt},
  mg/.style={<->,dashed,line width=1pt,red!70!black,shorten >=2pt,shorten <=2pt},
  ml/.style={midway,font=\scriptsize,color=red!70!black,fill=white,inner sep=1.2pt}]
  \node[v] (sig) at (0,0) {};
  \node[v] (a)   at (2.1,1.05) {};
  \node[v] (b)   at (2.1,-1.05) {};
  \node[v] (tau) at (4.2,0) {};
  \node[box] (A1) at (1.05,0.72) {$A_1$};
  \node[box] (A2) at (1.05,-0.72) {$A_2$};
  \node[box] (A3) at (2.1,0) {$A_3$};
  \node[box] (A4) at (3.15,0.72) {$A_4$};
  \node[box] (A5) at (3.15,-0.72) {$A_5$};
  \draw[->] (sig)--(A1); \draw[->] (A1)--(a);
  \draw[->] (sig)--(A2); \draw[->] (A2)--(b);
  \draw[->] (a)--(A3);   \draw[->] (A3)--(b);
  \draw[->] (a)--(A4);   \draw[->] (A4)--(tau);
  \draw[->] (b)--(A5);   \draw[->] (A5)--(tau);
  \node at (2.1,-1.85) {$\Phi_W(A_1,\dots,A_5)$};
  \begin{scope}[xshift=5.4cm]
    \node[t] (sA) at (1.75,1.35) {};
    \node[t] (zA) at (1.75,-1.35) {};
    \node[v] (a0) at (0,0) {};
    \node[v] (a1) at (1.5,0) {};
    \node at (2.15,0) {$\cdots$};
    \node[v] (a2) at (3.5,0) {};
    \draw[->] (a0)--(a1) node[el]{$x_1^A$};
    \draw[->] (a1)--(1.85,0);
    \draw[->] (2.45,0)--(a2) node[el]{$x_{t_A-1}^A$};
    \foreach \nd in {a0,a1,a2}{\draw[tap] (sA)--(\nd); \draw[tap] (\nd)--(zA);}
    \node[font=\scriptsize,black!60] at (1.75,-1.9) {$A$: all taps weight $0$};
    \node[t] (sB) at (6.85,1.35) {};
    \node[t] (zB) at (6.85,-1.35) {};
    \node[v] (b0) at (5.1,0) {};
    \node[v] (b1) at (6.6,0) {};
    \node at (7.25,0) {$\cdots$};
    \node[v] (b2) at (8.6,0) {};
    \draw[->] (b0)--(b1) node[el]{$x_1^B$};
    \draw[->] (b1)--(6.95,0);
    \draw[->] (7.55,0)--(b2) node[el]{$x_{t_B-1}^B$};
    \foreach \nd in {b0,b1,b2}{\draw[tap] (sB)--(\nd); \draw[tap] (\nd)--(zB);}
    \node[font=\scriptsize,black!60] at (6.85,-1.9) {$B$: all taps weight $0$};
    \draw[mg] (sA) to[bend left=13]  node[ml]{merge} (sB);
    \draw[mg] (zA) to[bend right=13] node[ml]{merge} (zB);
    \draw[mg] (a2) to[bend right=32] node[ml,below=2pt]{merge} (b0);
  \end{scope}
\end{tikzpicture}}
  \caption{\textbf{(L)}~The Wheatstone edge substitution $\Phi_W(A_1,\dots,A_5)$.
  \textbf{(R)}~Merging two instances of clamped min-subarray: the two sources, the two sinks,
  and the last internal vertex of~$A$ with the first internal vertex of~$B$.}
  \label{fig:sp-limits}
\end{figure}

\section{Conclusion}\label{sec:conclusion}

The triality (Theorem~\ref{thm:triality}) means that compositional structure discovered in any one of the three languages --- graph, algebra, polyhedron --- transfers exactly to the others. The invariant calculus then characterizes the full redundancy in the DAG encoding, identifying which structural differences are artifacts of the representation rather than intrinsic properties of the computed function.
The negative results constrain equivariant architecture design. Semiring-native substitutions cannot lift the polyhedral geometry from length $T$ to length $T{+}1$ (Section~\ref{sec:geometry}), and the compositional symmetries available to Bellman-style reasoning are doubly incomplete: series-parallel does not generate all DAG topologies, and terminal-only operations do not capture all DP compositions (Section~\ref{sec:sp-limits}). Together these bound what $(\min, +)$ structure alone in an architecture can achieve for length generalization, and point toward richer compositional operations that would be needed.

\begin{ack}
This work was partially supported by the NSF Division of Mathematical Sciences: Statistics Program DMS 2409819, and by the Peter Buck Student Internship Fund administered by Bowdoin College. The authors thank Baran Hashemi (Postdoctoral Scholar, MPI MiS Leipzig) for his generous feedback prior to submission.
\end{ack}

\bibliographystyle{plainnat}
\bibliography{ref}

\clearpage
\appendix

\section{Series and parallel composition}\label{app:sp-comp}
\begin{figure}[htbp]
  \centering
\begin{tikzpicture}[>=Stealth,scale=0.85,
  every node/.style={font=\small,transform shape},
  v/.style={circle,draw,inner sep=0pt,minimum size=5mm},
  box/.style={draw,rounded corners,minimum width=11mm,minimum height=8mm}]
  \node[v] (sA) at (0,0) {$s_A$};
  \node[box] (A) at (1.5,0) {$A$};
  \node[v] (mid) at (3.0,0) {$z_A{=}s_B$};
  \node[box] (B) at (4.9,0) {$B$};
  \node[v] (zB) at (6.4,0) {$z_B$};
  \draw[->] (sA)--(A); \draw[->] (A)--(mid); \draw[->] (mid)--(B); \draw[->] (B)--(zB);
  \node at (3.2,-1.20) {\small series $A;B$: $\ \dist_{A;B}=\dist_A + \dist_B$};
  \begin{scope}[xshift=7.7cm]
    \node[v] (s) at (0,0) {$s$};
    \node[box] (Ap) at (1.7,0.56) {$A$};
    \node[box] (Bp) at (1.7,-0.56) {$B$};
    \node[v] (z) at (3.4,0) {$z$};
    \draw[->] (s)--(Ap); \draw[->] (Ap)--(z);
    \draw[->] (s)--(Bp); \draw[->] (Bp)--(z);
    \node at (1.7,-1.20) {\small parallel $A\|B$: $\ \dist_{A\|B}=\min(\dist_A, \dist_B)$};
  \end{scope}
\end{tikzpicture}
  \caption{Series and parallel composition of two-terminal DAGs: series identifies $z_A$ with
  $s_B$, parallel identifies the two sources and the two sinks.}
  \label{fig:sp-comp}
\end{figure}

\section{Tropical geometry in a nutshell}\label{app:crash-course}

This appendix collects the definitions and examples needed to read the main text.
We follow the conventions of Joswig~\citep{joswig2021essentials} and
Maclagan--Sturmfels~\citep{maclagan-sturmfels2015}; proofs and formal statements
are in Appendix~\ref{app:scaffolding}.

\subsection{The tropical semiring}\label{app:cc-semiring}

The \emph{tropical semiring} is the set $\mathbb{T} = \mathbb{R} \cup \{+\infty\}$ equipped with
the operations: \emph{tropical addition} $a \oplus b \coloneqq \min(a,b)$ and \emph{tropical
multiplication} $a \odot b \coloneqq a + b$.  The additive identity is $+\infty$
($\min(a, {+}\infty) = a$) and the multiplicative identity is $0$ ($a + 0 = a$).  Every finite
element has a multiplicative inverse ($a \odot (-a) = 0$), but tropical addition is
\emph{idempotent}: $a \oplus a = a$, so no finite element has an additive inverse and
$\mathbb{T}$ is a semiring, not a ring.  The $(\max, +)$ convention is equivalent under
negation; we use $(\min, +)$ throughout, aligned with shortest-path formulations.

\subsection{Tropical polynomials}\label{app:cc-polynomials}

A \emph{tropical monomial} $c \odot x_1^{\odot\alpha_1} \odot \cdots \odot x_d^{\odot\alpha_d}$
(with $\alpha \in \mathbb{Z}_{\ge 0}^d$, $c \in \mathbb{R}$) evaluates to the affine
function $c + \alpha_1 x_1 + \cdots + \alpha_d x_d = c + \langle \alpha, x\rangle$, since
tropical exponentiation is repeated addition: $x^{\odot k} = kx$.

A \emph{tropical polynomial} is a finite tropical sum of monomials:
\[
  f(x) = \bigoplus_{\alpha \in S} c_\alpha \odot x^{\odot\alpha}
       = \min_\alpha\bigl(c_\alpha + \langle\alpha, x\rangle\bigr).
\]
Its graph is the lower envelope of finitely many affine functions --- piecewise linear and
concave.

\begin{example}[A univariate tropical polynomial]\label{eg:cc-univariate}
Let $f(x) = 4 \oplus (2 \odot x) \oplus (1 \odot x^{\odot 2}) \oplus (3 \odot x^{\odot 3})
= \min(4,\; 2{+}x,\; 1{+}2x,\; 3{+}3x)$.  Its graph is the lower envelope of four lines
(Figure~\ref{fig:cc-dome}).  The envelope bends at $x = -2, 1, 2$; this is the tropical
vanishing set (the points where two or more monomials tie for the minimum).
\end{example}

\paragraph{Formal vs.\ function.}
Two formal polynomials can evaluate to the same function: for instance,
$\min(0, x, 2x) = \min(0, 2x)$ for all $x$, because the monomial $x$ is never the
unique minimizer.  This formal/function distinction is what the two rows of the triality
(Theorem~\ref{thm:triality}) capture.

\paragraph{Support and lifted support.}
The \emph{support} $S(f) = \{\alpha : c_\alpha \ne {+}\infty\}$ records which monomials are
present.  The \emph{lifted support}
$\hat S(f) = \{(\alpha, c_\alpha) : \alpha \in S(f)\} \subset \mathbb{R}^d \times \mathbb{R}$
raises each exponent to its coefficient height.  When two terms share an exponent, their
tropical sum keeps the lesser coefficient; this per-exponent reduction is called
$\vmin$ (vertical minimum), and a polynomial in \emph{normal form} has been reduced by
$\vmin$.

\subsection{The geometric dictionary}\label{app:cc-geometry}

\paragraph{The dome.}
The region weakly below the graph,
$\mathcal{D}(f) = \{(x,s) \in \mathbb{R}^d \times \mathbb{R} : s \le f(x)\}$, is a convex
polyhedron cut out by one half-space per monomial.  Each facet of $\mathcal{D}(f)$ is the
locus where one monomial is the unique minimizer; the function bends exactly at the ridges
(codimension-$2$ faces) of $\mathcal{D}(f)$.

\begin{figure}[htbp]
  \centering
\begin{tikzpicture}[x=0.72cm,y=0.36cm]
  \fill[blue!10]
    (-3.3,-6.9) -- (-2,-3) -- (1,3) -- (2,4) -- (3.6,4)
    -- (3.6,-7.7) -- (-3.3,-7.7) -- cycle;
  \draw[gray!60] (-3.45,-7.35) -- (0.85,5.55)
    node[above,font=\tiny] {$3{+}3x$};
  \draw[gray!60] (-3.45,-5.90) -- (2.25,5.50)
    node[above,font=\tiny] {$1{+}2x$};
  \draw[gray!60] (-3.45,-1.45) -- (3.45,5.45)
    node[above,font=\tiny] {$2{+}x$};
  \draw[gray!60] (-3.45,4) -- (3.75,4)
    node[right,font=\tiny] {$4$};
  \draw[->] (-3.6,0) -- (4.0,0) node[right,font=\scriptsize] {$x$};
  \draw[->] (0,-8.0) -- (0,6.0) node[above,font=\scriptsize] {$s$};
  \draw[very thick,red!80!black]
    (-3.3,-6.9) -- (-2,-3) -- (1,3) -- (2,4) -- (3.6,4);
  \foreach \x/\y in {-2/-3, 1/3, 2/4}{
    \fill[red!80!black] (\x,\y) circle (1.8pt);
    \draw[densely dashed,gray!60] (\x,\y) -- (\x,0);
  }
  \node[blue!55!black,font=\scriptsize] at (-1.0,-5.4) {$\mathcal{D}(f)$};
\end{tikzpicture}
  \caption{The dome $\mathcal{D}(f)$ of $f = \min(4,\; 2{+}x,\; 1{+}2x,\; 3{+}3x)$
  from Example~\ref{eg:cc-univariate}.  Each facet is a monomial's locus of dominance; the
  three vertices of the dome project to the tropical zeroes $\{-2, 1, 2\}$.}
  \label{fig:cc-dome}
\end{figure}

\paragraph{Newton polytope and extended Newton polyhedron.}
The \emph{Newton polytope} $\mathcal{N}(f) = \operatorname{conv}(S(f)) \subseteq \mathbb{R}^d$
records which monomials are present, forgetting coefficients.  The \emph{extended Newton
polyhedron}
\[
  \Gamma(f) = \operatorname{conv}\bigl(\hat S(f)\bigr) + \mathbb{R}_{\ge 0}\,e_{d+1}
\]
retains the coefficients: it is the convex hull of the lifted support, extended upward by an
infinite ray.  Its lower faces encode the polynomial's evaluation, and the key fact is that
$\Gamma(f) = \Gamma(g)$ if and only if $f$ and $g$ agree as functions
\citep[Thm.~1.13]{joswig2021essentials}.

\paragraph{Tropical hypersurface.}
The \emph{tropical hypersurface}
\[\mathcal{T}(f) = \{x \in \mathbb{R}^d : \text{the minimum in
} f(x) \text{ is attained by } {\ge}\,2 \text{ monomials}\}\]
is the locus where strategies tie
for optimality, i.e.\ a decision boundary.  It is a polyhedral complex of pure dimension $d - 1$, the projection of the
codimension-$2$ skeleton of the dome.

\begin{example}[The tropical line]\label{eg:cc-tropical-line}
The polynomial $f(x,y) = 0 \oplus x \oplus y = \min(0, x, y)$ has three monomials.  The
tropical hypersurface $\mathcal{T}(f)$ consists of three rays emanating from the origin
(Figure~\ref{fig:cc-tropical-line}):
$\{x = 0 \le y\}$, $\{y = 0 \le x\}$, and $\{x = y \le 0\}$.  Each ray separates two regions
of linearity, one per monomial that dominates there.
\end{example}

\begin{figure}[htbp]
  \centering
\begin{tikzpicture}[>=Stealth,scale=0.65]
  \fill[blue!6] (-2.2,-2.2) rectangle (2.2,2.2);
  \draw[very thick,red!80!black] (0,0) -- (2.2,0)
    node[right,font=\scriptsize,black] {$x{=}0$};
  \draw[very thick,red!80!black] (0,0) -- (0,2.2)
    node[above,font=\scriptsize,black] {$y{=}0$};
  \draw[very thick,red!80!black] (0,0) -- (-2.0,-2.0)
    node[below left,font=\scriptsize,black] {$x{=}y$};
  \fill[red!80!black] (0,0) circle (2pt);
  \node[font=\scriptsize] at (1.2,1.2) {$s{=}0$};
  \node[font=\scriptsize] at (-1.3,0.7) {$s{=}x$};
  \node[font=\scriptsize] at (0.7,-1.3) {$s{=}y$};
\end{tikzpicture}
  \caption{The tropical line of Example~\ref{eg:cc-tropical-line}: the hypersurface of
  $f = \min(0, x, y)$ is three rays meeting at the origin, separating the three regions on
  which a single monomial attains the minimum.}
  \label{fig:cc-tropical-line}
\end{figure}

\subsection{Two-terminal DAGs and path polynomials}\label{app:cc-dags}

A \emph{two-terminal DAG} is an acyclic directed graph with a designated source $s$ and
sink $z$ such that every vertex lies on some $s$--$z$ path.  Its \emph{path polynomial} is
\[
  f_G(x) = \bigoplus_{\pi\,:\, s \to z}\; \bigodot_{e \in \pi} w_e
         = \min_{\pi\,:\, s \to z} \sum_{e \in \pi} w_e\,,
\]
the tropical sum over all $s$--$z$ paths of the tropical product of edge weights.  Each path
contributes one monomial; the polynomial is their tropical sum.

Series composition ($A ; B$, identifying $z_A = s_B$) gives
$f_{A;B} = f_A \odot f_B = f_A + f_B$; parallel composition ($A \| B$, identifying sources and
sinks) gives $f_{A\|B} = f_A \oplus f_B = \min(f_A, f_B)$.  Thus $G \mapsto f_G$ preserves both $\|$ and $;$ from two-terminal DAGs to tropical polynomials --- the starting point of the
triality in Section~\ref{sec:triality}.

\section{Polynomials, configurations, and polyhedra}\label{app:scaffolding}

\begin{definition}\label{def:lifted-config}
For a formal tropical polynomial $f = \min_\alpha (c_\alpha + x^\alpha)$ in $d$ variables
in normal form (one least coefficient per exponent), the \emph{support} is
$S(f) = \{\alpha : c_\alpha \ne +\infty\}$ and the \emph{lifted configuration} is
$\hat S(f) = \{(\alpha, c_\alpha) : \alpha \in S(f)\} \subseteq \mathbb{R}^d \times \mathbb{R}$.
\end{definition}

\begin{definition}\label{def:vmin}
For finite $S \subseteq \mathbb{Z}^d_{\ge 0} \times \mathbb{R}$, $\vmin(S)$ keeps over each exponent occurring
in $S$ only the least point above it; thus $\vmin(S) = S$ iff $S$ has at most one point over
each exponent.
\end{definition}

\begin{proposition}\label{prop:config-semiring}
Let $\mathcal{L}_d$ be the finite subsets of $\mathbb{Z}^d_{\ge 0} \times \mathbb{R}$ with at most one point over each
exponent, with addition $\vmin(S \cup T)$ and multiplication $\vmin\{(\alpha+\alpha',
c_\alpha + c_{\alpha'})\}$. Then $\mathcal{L}_d$ is a commutative idempotent semiring (identities
$\emptyset$ and $\{(0,0)\}$), and $f \mapsto \hat S(f)$ is a semiring isomorphism from
normal-form formal tropical polynomials onto $\mathcal{L}_d$.
\end{proposition}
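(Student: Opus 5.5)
The plan is to reduce the statement to a short list of identities, pushing as much as possible through the bijection $f \mapsto \hat S(f)$ instead of checking the semiring axioms on $\mathcal{L}_d$ directly. The natural order is:
\begin{enumerate}
\item show that the operations on $\mathcal{L}_d$ are well defined;
\item show that $\hat S$ is a bijection;
\item show that $\hat S$ preserves both operations;
\item transport the semiring axioms from polynomials to $\mathcal{L}_d$.
\end{enumerate}

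\textbf{Well-definedness.} The operations land in $\mathcal{L}_d$ because each ends with $\vmin$, which by Definition~\ref{def:vmin} leaves at most one point over each exponent. Finiteness is preserved: $S \cup T$ is finite, and the Minkowski set $\{(\alpha+\alpha', c_\alpha + c_{\alpha'})\}$ has at most $|S|\,|T|$ points.

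\textbf{Bijectivity.} A normal-form polynomial is exactly a finitely supported function from exponents to $\mathbb{R}$, with $+\infty$ meaning absent. $\hat S$ is its graph, and an element of $\mathcal{L}_d$ is exactly such a graph. So $\hat S$ is a bijection onto $\mathcal{L}_d$, with inverse $S \mapsto \min_{(\alpha,c)\in S}(c+\langle\alpha,x\rangle)$.

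\textbf{Homomorphism.} The formal sum $f \oplus g$ has, over each exponent $\alpha$, coefficient $\min(c_\alpha, d_\alpha)$. This is precisely $\vmin(\hat S(f)\cup \hat S(g))$. For the formal product, distribute:
\[
f \odot g = \bigoplus_{\alpha,\beta} (c_\alpha + d_\beta) \odot x^{\odot(\alpha+\beta)}.
\]
Reducing to normal form collects all pairs with $\alpha+\beta=\gamma$ and keeps $\min(c_\alpha+d_\beta)$. That is exactly $\vmin$ of the pairwise Minkowski sums. Identities are preserved: the empty polynomial ($+\infty$) maps to $\emptyset$, and the constant $0$ maps to $\{(0,0)\}$.

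\textbf{Transport of axioms.} Normal-form formal tropical polynomials are the polynomial semiring $\mathbb{T}[x_1,\dots,x_d]$ over the commutative idempotent semiring $\mathbb{T}$. That polynomial semiring is itself commutative and idempotent, since $f\oplus f=f$ coefficientwise. A bijection preserving both operations then transports every axiom to $\mathcal{L}_d$, so $\mathcal{L}_d$ is a commutative idempotent semiring and $\hat S$ is an isomorphism.

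\textbf{Main obstacle.} The one place needing care is that normal-form reduction commutes with the operations, i.e.\ that
\[
\vmin(\vmin(A)\cup B)=\vmin(A\cup B),
\]
together with its analogue for products. The latter says that reducing before or after taking Minkowski sums gives the same result. Both follow because $\min$ is associative and $+$ distributes over $\min$ in $\mathbb{T}$: $\min_\alpha(c_\alpha)+d = \min_\alpha(c_\alpha+d)$. This distributivity is also what underlies associativity and distributivity in $\mathcal{L}_d$, for example that $S\cdot\vmin(T\cup U)=\vmin(S\cdot T\cup S\cdot U)$.
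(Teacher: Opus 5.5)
Your proposal is correct and follows essentially the paper's route. The paper also transports the semiring structure from formal tropical polynomials (citing Joswig's Observation~2.15 for Laurent polynomials and restricting to nonnegative exponents) through the relabelling $f \mapsto \hat S(f)$, which carries $\min$ to $\vmin$ of the union and $+$ to $\vmin$ of the sumset; you spell out the well-definedness, bijectivity and operation-preservation checks that the paper compresses into one line, and your remark that $\vmin$ commutes with both operations is a harmless extra rather than a needed step.
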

\begin{proof}
Formal tropical Laurent polynomials are a semiring
\citep[Observation~2.15]{joswig2021essentials}; nonnegative-exponent configurations form a
sub-semiring containing $\emptyset$ and $\{(0,0)\}$, and $f \mapsto \hat S(f)$ is a relabelling
carrying tropical $\min$ to $\vmin$ of the union and $+$ to $\vmin$ of the sumset.
\end{proof}

\begin{definition}\label{def:ext-newt}
For $f$ with support $S$, $\Gamma(f) = \operatorname{conv}\{(\alpha, c_\alpha) : \alpha \in S\} +
(\{0\} \times \mathbb{R}_{\ge 0}) \subseteq \mathbb{R}^d \times \mathbb{R}$. Its \emph{lower faces} (outer normal
$(y,1)$) project onto $\mathcal{N}(f) = \operatorname{conv}(S)$ \citep[\S1.1]{joswig2021essentials}.
\end{definition}

\begin{lemma}\label{lem:gamma-function}
(i) $\Gamma(f)$ is its lower hull plus the upward ray, so the lower hull determines it. (ii)
$\Gamma(f) \mapsto f$ is a bijection from extended Newton polyhedra onto tropical polynomial
functions. \citep[Observation~1.8, Theorem~1.13]{joswig2021essentials}
\end{lemma}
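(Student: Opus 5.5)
The plan is to prove (i) directly from the definition $\Gamma(f) = \operatorname{conv}(\hat S(f)) + \mathbb{R}_{\ge 0} e_{d+1}$. For (ii), the plan is to show that $\Gamma(f)$ determines the function $f$ and, conversely, that the function $f$ determines $\Gamma(f)$. The second direction goes through a supporting-hyperplane description of $\Gamma(f)$ written entirely in terms of the values of $f$.

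For (i), I fix $(\alpha, h) \in \Gamma(f)$ and consider the vertical fiber $\{t : (\alpha, t) \in \Gamma(f)\}$. This fiber is a closed ray $[h_0, \infty)$. It is bounded below because every point of $\Gamma(f)$ has height at least the least coefficient $\min_\beta c_\beta$. It is closed because $\operatorname{conv}(\hat S(f))$ is compact and $\Gamma(f)$ is a polyhedron. The point $(\alpha, h_0)$ lies on the lower hull, meaning the faces with outer normals $(y,1)$ together with their limits. Writing $(\alpha, h) = (\alpha, h_0) + (h - h_0)e_{d+1}$ then gives $\Gamma(f) = \text{lower hull} + \mathbb{R}_{\ge 0}e_{d+1}$. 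The reverse inclusion is immediate.

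For the forward half of (ii), I establish the identity
\[
f(x) = \min_{(\alpha,h)\in\Gamma(f)} \bigl(h + \langle \alpha, x\rangle\bigr).
\]
The right-hand side minimizes the linear functional $(\alpha, h) \mapsto \langle (x,1), (\alpha,h)\rangle$ over $\Gamma(f)$. This functional is nonnegative on the recession direction $e_{d+1}$, so its minimum is attained at a vertex. Every vertex lies in $\hat S(f)$, and each point of $\hat S(f)$ lies in $\Gamma(f)$, so the minimum equals $\min_\alpha(c_\alpha + \langle\alpha,x\rangle) = f(x)$. Hence $\Gamma(f) \mapsto f$ is well defined on polyhedra and surjective onto tropical polynomial functions, which settles the first half of (ii).

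For injectivity I claim that
\[
\Gamma(f) = \{(\alpha,h) : h + \langle\alpha,x\rangle \ge f(x) \text{ for all } x \in \mathbb{R}^d\}.
\]
The inclusion $\subseteq$ is the identity just proved. For $\supseteq$, I take $(\alpha, h) \notin \Gamma(f)$ and separate it strictly from the closed convex set $\Gamma(f)$ by a linear functional $(y, \lambda)$, so that $\langle y,\alpha\rangle + \lambda h < \min_{\Gamma(f)}(\langle y,\cdot\rangle + \lambda\,\cdot)$. Because $\Gamma(f)$ contains the upward ray, $\lambda \ge 0$.

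I expect this normalization of $\lambda$ to be the main obstacle. If $\lambda > 0$, I rescale to $\lambda = 1$ and obtain $h + \langle \alpha, y\rangle < f(y)$, so the inequality fails at $x = y$. If $\lambda = 0$, then $y$ strictly separates $\alpha$ from $\mathcal{N}(f)$, meaning $\langle y,\alpha\rangle < \min_{\beta\in S}\langle y,\beta\rangle$. Evaluating along $x = ty$ gives $f(ty) \ge t\min_\beta\langle y,\beta\rangle + \min_\beta c_\beta$, and this eventually exceeds $h + t\langle\alpha,y\rangle$ as $t \to \infty$, so the inequality again fails. Either way $(\alpha, h)$ is excluded from the right-hand set, so $\Gamma(f)$ is recovered from the function $f$ alone and the map is injective. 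All of this matches \citep[Observation~1.8, Theorem~1.13]{joswig2021essentials}, which may be cited for the routine convexity facts used here: vertices lie in $\hat S(f)$, linear minima are attained at vertices, and strict separation holds for closed polyhedra.
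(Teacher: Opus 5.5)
Your proof is correct. Note, though, that the paper gives no argument of its own for this lemma: both parts are simply deferred to Joswig (Observation~1.8, Theorem~1.13).

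The nearest thing to an argument in the paper is a remark in the proof of Lemma~\ref{lem:gamma-quotient}. There the paper says that, since $\Gamma + \mathbb{R}_{\ge 0}e_{d+1} = \Gamma$, the polyhedron is ``determined by its lower support function'' $x \mapsto \min_{(\alpha,c)\in\Gamma}(\langle \alpha, x\rangle + c)$, i.e.\ by $f$ itself. Your proposal is a self-contained, rigorous version of exactly that remark. The identity $f(x) = \min_{\Gamma(f)}(h + \langle\alpha,x\rangle)$ gives well-definedness, and strict separation gives injectivity.

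You also supply the step the paper's slogan skips. The function $f$ only records minima of functionals of the form $(x,1)$. A closed convex set, however, is a priori cut out by all of its supporting half-spaces, including the vertical ones ($\lambda = 0$) whose intersection is $\mathcal{N}(f)\times\mathbb{R}$. Your ray argument along $x = ty$ shows these are recovered from the asymptotic slopes of $f$, in effect $\min_{\beta\in S}\langle y,\beta\rangle = \lim_{t\to\infty} f(ty)/t$. So the Newton polytope is visible in the function.

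The citation buys brevity. Your route buys independence from the reference, using only strict separation and the vertex description of pointed polyhedra, and it also justifies the later support-function remark.

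Two small tidy-ups.
\begin{itemize}
\item In (i), the phrase ``together with their limits'' is unnecessary. The union of the finitely many lower faces is already closed. Moreover, every fiber minimum $(\alpha,h_0)$ already minimizes some $(y,1)$, because the polyhedral convex function $\alpha\mapsto h_0(\alpha)$ has a subgradient at every point of $\mathcal{N}(f)$. Alternatively, simply define the lower hull as the set of fiber minima.
\item Set aside the case $f\equiv+\infty$, $\Gamma(f)=\emptyset$, before you use $\min_\beta c_\beta$. Your formulas remain consistent there under $\min\emptyset=+\infty$.
\end{itemize}
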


\begin{lemma}\label{lem:gamma-quotient}
On $\mathcal{L}_d$ set $S \sim T$ iff $\operatorname{conv}(S) + \mathbb{R}_{\ge 0}e_{d+1} = \operatorname{conv}(T) + \mathbb{R}_{\ge 0}e_{d+1}$.
Then $\sim$ is a congruence for $(\min, +)$, and $\gamma : S \mapsto \operatorname{conv}(S) +
\mathbb{R}_{\ge 0}e_{d+1}$ induces a semiring isomorphism $\mathcal{L}_d/\!\sim\ \xrightarrow{\cong}
(\{\Gamma\}, \operatorname{conv}(\cdot \cup \cdot), +)$.
\end{lemma}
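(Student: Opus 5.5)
The plan is to prove the two claims of Lemma~\ref{lem:gamma-quotient} separately: first that $\sim$ is a congruence, then that $\gamma$ induces a semiring isomorphism. Throughout, write $\gamma(S) = \operatorname{conv}(S) + \mathbb{R}_{\ge 0}e_{d+1}$. By construction $\sim$ is the kernel of $\gamma$, so it is automatically an equivalence relation, and $\gamma$ descends to an injective map on $\mathcal{L}_d/\!\sim$.

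\textbf{Congruence.} It suffices to show that $\gamma$ turns the two operations of $\mathcal{L}_d$ into operations on polyhedra.

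For addition, I will show
\[
\gamma(\vmin(S \cup T)) = \operatorname{conv}\bigl(\gamma(S) \cup \gamma(T)\bigr).
\]
Two facts give this. First, $\vmin$ only deletes points lying vertically above a retained point. Such a deleted point lies in (retained point) $+ \mathbb{R}_{\ge 0}e_{d+1}$, so $\gamma(\vmin(U)) = \gamma(U)$ for any finite $U$. Second, $\operatorname{conv}(A \cup B) + R = \operatorname{conv}\bigl((A+R) \cup (B+R)\bigr)$ for the cone $R = \mathbb{R}_{\ge 0}e_{d+1}$. Here $\operatorname{conv}(\cdot\cup\cdot)$ is read as the (closed) convex hull. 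Since both sets share the same recession cone, the hull of the union is already closed and equals $\operatorname{conv}(S\cup T)+R$.

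For multiplication, I will show $\gamma(S \cdot T) = \gamma(S) + \gamma(T)$ (Minkowski sum). Since $\gamma \circ \vmin = \gamma$, the $\vmin$ can be dropped, so it is enough to treat the sumset $S+T$. Then use $\operatorname{conv}(S+T) = \operatorname{conv}(S) + \operatorname{conv}(T)$, a standard Minkowski fact, together with $R + R = R$.

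Both identities express $\gamma$ of a sum or product purely in terms of $\gamma(S)$ and $\gamma(T)$. So if $S\sim S'$ and $T\sim T'$, both results agree, and $\sim$ is a congruence.

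\textbf{Isomorphism.} The same identities say that $\gamma$ is a homomorphism onto the set $\{\Gamma\}$ equipped with $(\operatorname{conv}(\cdot\cup\cdot),+)$. I also need this target set to be closed under the two operations. Closure holds because each result is again $\gamma$ of some element of $\mathcal{L}_d$.

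The identities map across as follows: $\gamma(\emptyset)=\emptyset$ is the additive identity, and $\gamma(\{(0,0)\}) = R$ is neutral for Minkowski sum.

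Surjectivity holds by definition, since every $\Gamma(f)$ equals $\gamma(\hat S(f))$. Injectivity on the quotient is exactly the definition of $\sim$. Hence $\gamma$ induces a bijective homomorphism, i.e.\ a semiring isomorphism. Lemma~\ref{lem:gamma-function} then identifies the target with tropical functions, which is consistent with the theorem's diagram.

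The main obstacle I expect is the $\operatorname{conv}(\cdot\cup\cdot)$ step. Two things need care there: making sure the hull of two unbounded polyhedra is closed and carries the same recession cone, and handling $\emptyset$ (where $\gamma(\emptyset) = \emptyset$, so $\emptyset$ acts as the additive identity and is absorbing for $+$). I would first prove the hull identity for nonempty finite point sets directly, as a hull of points plus the common ray. Then I would treat the empty cases separately.
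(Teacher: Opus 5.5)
Your proof is correct, but it works on the primal side where the paper works on the dual side.

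\textbf{The paper's route.} The paper never writes down your set identities. It notes that every $\Gamma$ satisfies $\Gamma + R = \Gamma$ (with $R = \mathbb{R}_{\ge 0}e_{d+1}$), so $\Gamma$ is determined by its lower support function $x \mapsto \min_{(\alpha,c)\in\Gamma}(\langle \alpha, x\rangle + c)$, which is exactly the tropical polynomial function. It then cites Rockafellar: the lower support function of $\operatorname{conv}$ of a union is the pointwise $\min$, and that of a Minkowski sum is the sum. Together with the remark that $\vmin$-discarded points sit on $R$ above retained ones, this gives the intertwining. The congruence then follows by the same kernel argument you use.

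\textbf{Your route.} You prove the intertwining as exact set equalities: $\gamma\circ\vmin = \gamma$, $\operatorname{conv}(\gamma(S)\cup\gamma(T)) = \operatorname{conv}(S\cup T) + R$, and $\operatorname{conv}(S+T) = \operatorname{conv}(S) + \operatorname{conv}(T)$ with $R + R = R$.

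\textbf{What each buys.} Your argument is elementary and avoids the duality step, namely that a closed convex set with $R$ in its recession cone is recovered from its values in directions $(x,1)$. It also makes explicit two points the paper leaves tacit.
\begin{itemize}
\item $\operatorname{conv}(\cdot\cup\cdot)$ of two such polyhedra is closed and again of the form $\gamma(\cdot)$. The support-function comparison needs this before it can conclude equality of sets, and it also gives closure of the target under both operations.
\item The behaviour of $\emptyset$ is handled, including that it is absorbing for Minkowski sum.
\end{itemize}
In exchange, the paper's route shows in the same stroke that $\Gamma \mapsto f$ carries $\operatorname{conv}(\cdot\cup\cdot)$ to $\min$ and Minkowski sum to $+$. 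That compatibility is exactly what the function row of the triality uses. In your version it has to be imported separately from Lemma~\ref{lem:gamma-function}, as you note.
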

\renewcommand{\proofname}{Proof of Lemma~\ref{lem:gamma-quotient}}
\begin{proof}
Write $R = \mathbb{R}_{\ge 0}e_{d+1}$. Each $\Gamma$ satisfies $\Gamma + R = \Gamma$, hence is
determined by its lower support function --- the map
$x \mapsto \min_{(\alpha,c) \in \Gamma} (\langle \alpha, x\rangle + c)$, which is exactly the
tropical polynomial function carried by the configuration. The lower support function of a convex hull of a union is the pointwise
minimum of the two \citep[Cor.~16.5.1]{rockafellar1970convex}, i.e.\ $\min$; the support
function of a Minkowski sum is the sum \citep[Thm.~13.3]{rockafellar1970convex}, i.e.\ $+$.
A point discarded by $\vmin$ lies on $R$ above a retained point and is invisible to $\gamma$, so
$\gamma$ intertwines the operations; that this holds forces $\sim$ to be a congruence.
$\gamma(\emptyset) = \emptyset$, $\gamma(\{(0,0)\}) = R$ are the identities.
\end{proof}
\renewcommand{\proofname}{Proof}

\begin{lemma}\label{lem:descent}
Let $\varphi : R \to R'$ be a semiring isomorphism and $\theta, \theta'$ congruences with
$a \mathbin{\theta} b \iff \varphi(a) \mathbin{\theta'} \varphi(b)$. Then there is a unique
$\bar\varphi : R/\theta \to R'/\theta'$ with $\bar\varphi \circ q = q' \circ \varphi$, and it is
a semiring isomorphism.
\end{lemma}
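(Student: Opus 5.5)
This is the standard correspondence theorem for quotient algebras. The plan is to define $\bar\varphi$ on classes, check it is well defined and injective using the hypothesis in both directions, obtain surjectivity from that of $\varphi$ and $q'$, and read off the homomorphism property from the fact that $q, q'$ are homomorphisms.

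\emph{Existence and uniqueness.} Write $q : R \to R/\theta$ and $q' : R' \to R'/\theta'$ for the quotient maps. Both are surjective semiring homomorphisms, because $\theta$ and $\theta'$ are congruences. Define $\bar\varphi([a]_\theta) = [\varphi(a)]_{\theta'}$. This is well defined by the forward implication of the hypothesis: if $a \mathbin{\theta} b$ then $\varphi(a) \mathbin{\theta'} \varphi(b)$. By construction $\bar\varphi \circ q = q' \circ \varphi$. Uniqueness holds because $q$ is surjective: any map satisfying this identity is determined on every class $q(a)$.

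\emph{Homomorphism property.} For classes $q(a), q(b)$ we compute
\[
\bar\varphi\bigl(q(a)+q(b)\bigr) = \bar\varphi\bigl(q(a+b)\bigr) = q'\bigl(\varphi(a+b)\bigr) = q'\bigl(\varphi(a)\bigr)+q'\bigl(\varphi(b)\bigr),
\]
and the right-hand side equals $\bar\varphi(q(a)) + \bar\varphi(q(b))$. Multiplication is handled identically. Identities are preserved since $q(0)\mapsto q'(\varphi(0)) = q'(0)$, and likewise for $1$.

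\emph{Bijectivity.} For injectivity, suppose $[\varphi(a)]_{\theta'} = [\varphi(b)]_{\theta'}$, that is, $\varphi(a) \mathbin{\theta'} \varphi(b)$. The reverse implication of the hypothesis then gives $a \mathbin{\theta} b$, so $q(a) = q(b)$. For surjectivity, $q' \circ \varphi$ is a composite of surjections, so every class of $R'/\theta'$ has the form $q'(\varphi(a)) = \bar\varphi(q(a))$. A bijective semiring homomorphism is an isomorphism, because its set-theoretic inverse automatically preserves the operations.

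There is no real obstacle here. The only point needing care is to use the biconditional in both directions, the forward implication for well-definedness and the reverse for injectivity, and to note that surjectivity of $\varphi$ is essential for surjectivity of $\bar\varphi$.
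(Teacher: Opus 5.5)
Your proof is correct and follows essentially the same route as the paper. You define $\bar\varphi([a]_\theta)=[\varphi(a)]_{\theta'}$, use the forward implication for well-definedness and the reverse for injectivity, get surjectivity from $\varphi$, the homomorphism property from $\varphi$ being a homomorphism and $\theta'$ a congruence, and uniqueness from surjectivity of $q$; your remark that a bijective homomorphism has a homomorphic inverse is a small point the paper leaves implicit.
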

\begin{proof}
We must show $\bar\varphi([a]) \coloneqq [\varphi(a)]_{\theta'}$ is a well-defined semiring
isomorphism $R/\theta \to R'/\theta'$.

\emph{Well-defined.}
If $[a]_\theta = [b]_\theta$, then $a \mathbin{\theta} b$, so by hypothesis
$\varphi(a) \mathbin{\theta'} \varphi(b)$, hence
$[\varphi(a)]_{\theta'} = [\varphi(b)]_{\theta'}$.

\emph{Injective.}
If $\bar\varphi([a]) = \bar\varphi([b])$, i.e.\
$\varphi(a) \mathbin{\theta'} \varphi(b)$, the reverse implication gives
$a \mathbin{\theta} b$, so $[a] = [b]$.

\emph{Surjective.}
For any $[r'] \in R'/\theta'$, surjectivity of $\varphi$ gives $a$ with
$\varphi(a) = r'$, so $\bar\varphi([a]) = [r']$.

\emph{Homomorphism.}
For either operation $*$ (with $*'$ its counterpart in $R'$):
\[
  \bar\varphi([a] * [b])
  = \bar\varphi([a * b])
  = [\varphi(a * b)]_{\theta'}
  = [\varphi(a) *' \varphi(b)]_{\theta'}
  = [\varphi(a)]_{\theta'} *' [\varphi(b)]_{\theta'}
  = \bar\varphi([a]) *' \bar\varphi([b]),
\]
using that $\varphi$ is a homomorphism (third equality) and that $\theta'$ is a
congruence (fourth equality). The identity elements map correctly since $\varphi$
preserves them.

\emph{Uniqueness.} Any $\psi$ with $\psi \circ q = q' \circ \varphi$ satisfies
$\psi([a]) = q'(\varphi(a)) = [\varphi(a)]_{\theta'} = \bar\varphi([a])$.
\end{proof}

\begin{proposition}\label{prop:path-hom}
For two-terminal DAGs, $f_{A;B} = f_A + f_B$ and $f_{A\|B} = \min(f_A, f_B)$; hence the two
congruences $\approx_{\mathrm f}, \approx_{\mathrm v}$ are compatible with $;$ and $\|$.
\end{proposition}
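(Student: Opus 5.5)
We prove the two identities by working directly with $s$--$z$ paths, and then deduce compatibility of the congruences from them.

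\emph{Series.} In $A;B$ the vertex $z_A = s_B$ is the only vertex shared by the two pieces. Every edge of $A$ ends at a vertex of $A$, and $z_A$ has no outgoing edges inside $A$, since $A$ is a two-terminal DAG and every vertex lies on an $s_A$--$z_A$ path. Hence an $s_A$--$z_B$ path can leave $A$ only through $z_A$. Because $B$ has no edges back into $A$, the path never re-enters $A$ afterwards. So every $s_A$--$z_B$ path factors uniquely as a concatenation $\pi_A\pi_B$ of an $s_A$--$z_A$ path and an $s_B$--$z_B$ path. Conversely, every such concatenation is a path, and acyclicity is preserved.

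This gives a bijection $\{\text{paths of } A;B\} \cong \{\text{paths of }A\}\times\{\text{paths of }B\}$, under which path weights add. Therefore
\[
f_{A;B} = \min_{\pi_A,\pi_B}\Bigl(\sum_{e\in\pi_A} w_e + \sum_{e\in\pi_B} w_e\Bigr) = f_A + f_B,
\]
using that $\min$ distributes over $+$. At the formal level the monomials of $f_{A;B}$ are exactly the pairwise products, which after $\vmin$ is the product in $\mathcal{L}_d$ of Proposition~\ref{prop:config-semiring}.

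\emph{Parallel.} In $A\|B$ only the terminals are identified. A path starting at $s$ uses a first edge lying in either $A$ or $B$. It can switch pieces only at a shared vertex, and the only candidates are $s$ and $z$. Acyclicity rules out returning to $s$, and a path stops at $z$. So every path lies entirely in $A$ or entirely in $B$, and the path set is the disjoint union of the two. Taking the minimum over a disjoint union gives $f_{A\|B} = \min(f_A,f_B)$. Formally this is the union of the monomial lists followed by $\vmin$.

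\emph{Congruences.} Suppose $A\approx_{\mathrm f}A'$ and $B\approx_{\mathrm f}B'$. Then $\hat S(f_A)=\hat S(f_{A'})$ and $\hat S(f_B)=\hat S(f_{B'})$. The identities above express $\hat S(f_{A;B})$ and $\hat S(f_{A\|B})$ as functions of $\hat S(f_A)$ and $\hat S(f_B)$ alone, through the $\mathcal{L}_d$ operations. Hence the lifted supports for $A;B$ and $A';B'$ agree, as do those for $A\|B$ and $A'\|B'$. The same argument at the level of functions handles $\approx_{\mathrm v}$, since $+$ and $\min$ of functions depend only on the functions. Each congruence is also an equivalence relation, because it is defined as equality of an invariant.

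The only delicate step is the unique-factorization claim for series paths. It relies on two facts: the two-terminal hypothesis makes $z_A$ a sink within $A$, and there are no edges from $B$ back to $A$. Everything else is bookkeeping.
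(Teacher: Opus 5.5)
Your proof is correct and follows the paper's own route. Series paths correspond bijectively to concatenations of $A$-paths and $B$-paths, and parallel paths form the disjoint union of the two path sets. Compatibility of $\approx_{\mathrm f}$ and $\approx_{\mathrm v}$ then follows because the $\vmin$ normal form and function evaluation both commute with these operations; you simply supply more of the detail the paper leaves implicit, notably why $z_A$ is a sink in $A$ (acyclicity plus two-terminality).
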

\begin{proof}
The $s$--$z$ paths of $A;B$ are concatenations of an $A$-path and a $B$-path, weight the
sum, so distributivity gives $f_A + f_B$; those of $A \| B$ are the disjoint
union, so the $\min$ splits as $\min(f_A, f_B)$. Normal form commutes with both.
\end{proof}

\section{\texorpdfstring{Proof of the triality (Theorem~\ref{thm:triality})}{Proof of the triality}}\label{app:triality}
\renewcommand{\proofname}{Proof}
\begin{proof}
Write $\Phi : [G] \mapsto f_G$.

\emph{(1) Formal row.} Since $\approx_{\mathrm f}$ is a congruence
(Proposition~\ref{prop:path-hom}), $\;\|\;$ and $;$ are well-defined on classes.
\begin{itemize}
\item \emph{Injective:} by the definition of $\approx_{\mathrm f}$.
\item \emph{Operation-preserving:} by Proposition~\ref{prop:path-hom}.
\item \emph{Identities:} the single $+\infty$-edge maps to $\emptyset$, the single $0$-edge to
  $\{(0,0)\}$.
\item \emph{Surjective:} given normal-form $f$, build one chain per support monomial ---
  $\alpha_i$ edges labelled $x_i$ and one edge of weight $c_\alpha$ --- and place the chains in
  parallel. Its path polynomial is $f$.
\end{itemize}
So $\Phi$ is an isomorphism onto the formal polynomials, a semiring by
Proposition~\ref{prop:config-semiring}; composing with $f \mapsto \hat S(f)$ gives the chain.

\emph{(2, 3) Function row.} Two graphs compute the same function iff their polynomials do iff
their extended Newton polyhedra agree, so the three congruences match under the formal-row
isomorphisms and the descent lemma applies. Concretely, let
$\theta_{\mathrm G}, \theta_{\mathrm F}, \theta_{\mathcal{L}}$ be, respectively:
\begin{itemize}
\item $[G]_{\mathrm f} \mathbin{\theta_{\mathrm G}} [H]_{\mathrm f}$ iff
  $G \approx_{\mathrm v} H$;
\item $f \mathbin{\theta_{\mathrm F}} g$ iff they evaluate to the same function;
\item $\sim$ of Lemma~\ref{lem:gamma-quotient}.
\end{itemize}
Now $G \approx_{\mathrm v} H$ iff $f_G, f_H$ evaluate equally, so $\theta_{\mathrm G}$ and
$\theta_{\mathrm F}$ correspond under $[G] \mapsto f_G$; and
$f \mathbin{\theta_{\mathrm F}} g$ iff $\Gamma(f) = \Gamma(g)$
(Lemma~\ref{lem:gamma-function}(ii)) iff $\hat S(f) \sim \hat S(g)$, so $\theta_{\mathrm F}$
and $\theta_{\mathcal{L}}$ correspond. $\theta_{\mathcal{L}}$ is a congruence
(Lemma~\ref{lem:gamma-quotient}), hence so are the others by transport. Applying
Lemma~\ref{lem:descent} to the two isomorphisms of~(1) with these matched congruences yields the
function-row isomorphisms, and since the vertical maps are the quotient maps, both squares
commute. No operation on $\Gamma$ is recomputed.
\end{proof}

\section{Invariance: normal form and completeness}\label{app:invariance}

\begin{definition}\label{def:deshared}
A two-terminal DAG is \emph{deshared} if distinct $s$--$z$ paths meet only at $s$ and $z$ (a
parallel bundle of chains). $D_{\mathrm f}(G)$ has one canonicalized chain per support monomial
of $f_G$; $D_{\mathrm v}(G)$ deletes additionally every chain whose lifted point lies weakly
above the lower hull of the others. Both may be constructed directly from the path polynomial itself, so DAGs $G, G'$ are equal exactly when their path polynomials are equal.
\end{definition}

\begin{lemma}\label{lem:deshare-count}
For a two-terminal DAG $G$, with $P(v)$ the number of $s$--$z$ paths through internal vertex $v$,
put $\Pi(G) = \sum_v (P(v) - 1)$. Then (1) $G$ is deshared iff $\Pi(G) = 0$; (2) a vertex split
is available iff $\Pi(G) > 0$; (3) exactly $\Pi(G)$ splits carry $G$ to deshared form.
\end{lemma}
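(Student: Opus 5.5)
The plan is to fix the meaning of a \emph{vertex split}, derive (1) directly, and then prove (2) and (3) together by showing that each split lowers $\Pi$ by exactly one. A split acts at an internal vertex $v$ with $P(v)\ge 2$. Choose an in-edge or out-edge partition at $v$ so that both sides carry paths. Replace $v$ by two copies $v', v''$. If $v$ has at least two in-edges, give $v'$ one nonempty class of in-edges and $v''$ the rest, and duplicate every out-edge of $v$ onto both copies with the same weights. If instead $v$ has a single in-edge, it must have at least two out-edges, and the roles are swapped. Path weights and the multiset of $s$--$z$ paths (read as sequences of weights) are unchanged, so $\hat S(f_G)$ is preserved and the move is formal.

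\textbf{Step (1).} If $G$ is deshared, each internal vertex lies on exactly one chain, so $P(v)=1$ for all $v$ and $\Pi(G)=0$. Here $P(v)\ge 1$ always holds, because every vertex lies on some $s$--$z$ path. Conversely, suppose $\Pi(G)=0$. Then two distinct paths cannot share an internal vertex, since that vertex would have $P(v)\ge2$. So the paths meet only at $s,z$, which is exactly the definition of deshared.

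\textbf{Steps (2) and (3).} If $\Pi>0$, pick $v$ with $P(v)\ge 2$. Then $v$ has at least two in-edges or at least two out-edges that lie on $s$--$z$ paths, since two distinct paths through $v$ must diverge on one side of $v$. So a split is available. Conversely, a split needs $P(v)\ge2$, hence $\Pi>0$.

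For the count, say we split on the in-side. Let $a_1,a_2$ be the numbers of $s$--$v$ paths entering through the two in-edge classes and $b$ the number of $v$--$z$ paths, so $P(v)=(a_1+a_2)b$. After the split, $P(v')=a_1b$ and $P(v'')=a_2b$. The key point is that $P(u)$ is unchanged for every other vertex $u$, because paths through $v$ biject with paths through $v'$ or $v''$. The change in $\Pi$ is therefore
\[
(a_1b-1)+(a_2b-1)-((a_1+a_2)b-1) = -1 .
\]
Since $\Pi\ge 0$, and by (2) a split exists exactly while $\Pi>0$, any maximal sequence of splits has length exactly $\Pi(G)$. It ends at $\Pi=0$, which is deshared by (1).

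\textbf{Main obstacle.} The delicate point is pinning down the definition of a split so that it decreases $\Pi$ by exactly one and never more. The naive full split at $v$, with one copy per through-path, would remove $P(v)-1$ at once. I will therefore state the binary split explicitly, check that it is a formal move preserving $\hat S(f_G)$, and check that it leaves $P(u)$ unchanged for $u\ne v$. The last check also needs acyclicity, so that duplicated edges create no new paths.
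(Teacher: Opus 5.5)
Your structure matches the paper's. (1) goes through $P\equiv 1$. (2) goes through the existence of a branching vertex. (3) goes through $P(v')+P(v'')=P(v)$ with every other $P(u)$ unchanged, so each binary split lowers $\Pi$ by exactly one; your explicit count $(a_1+a_2)b$ and your remark about the full split are fine.

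One step, however, is false as written. From $P(v)\ge 2$ you conclude that $v$ \emph{itself} has two in-edges or two out-edges. Your definition of a split makes the same claim: a vertex with $P(v)\ge 2$ and a single in-edge ``must'' have two out-edges. Two distinct paths through $v$ do differ on one side of $v$, but they need not branch \emph{at} $v$. Take edges $s\to a$, $s\to b$, $b\to a$, $a\to v$, $v\to z$. The $s$--$z$ paths are $s\,a\,v\,z$ and $s\,b\,a\,v\,z$, so $P(v)=2$. Yet $v$ has in-degree and out-degree $1$ and admits no split; the branching is at $a$.

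The repair is short.
\begin{itemize}
\item Define a split at any internal vertex with in-degree or out-degree at least $2$.
\item Every edge $p\to u$ lies on an $s$--$z$ path: concatenate an $s$--$p$ path, the edge, and a $u$--$z$ path, which exist by two-terminality, and acyclicity makes this walk a path. Hence any nontrivial partition has both sides carrying paths, and such a vertex $u$ has $P(u)\ge 2$. This gives the converse half of (2).
\item For the forward half, take two distinct paths through $v$ that differ, say, before $v$. Walk backward from $v$ along their common suffix to the first vertex $u$ at which their incoming edges differ. If the walk reached $s$ with no difference, the two $s$--$v$ segments would coincide, so $u\ne s$. Since $u$ lies at or before $v$, $u\ne z$. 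So $u$ is an internal vertex with two in-edges and can be split.
\end{itemize}
The paper gets the same result by negating the global equivalence in (1): $\Pi(G)=0$ iff every internal vertex has in-degree and out-degree $1$, so $\Pi(G)>0$ iff \emph{some} internal vertex branches, not necessarily the one you picked. With this change the rest of your argument, including the count of exactly $\Pi(G)$ splits, goes through as you wrote it.
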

\begin{proof}
By two-terminality each internal $v$ has $P(v) \ge 1$.
\begin{itemize}
\item[(1)] $\Pi(G) = 0$ iff every $P(v) = 1$ iff every internal vertex has in- and out-degree
  $1$ on the path subgraph, which is desharedness.
\item[(2)] $\Pi(G) > 0$ iff some vertex has in- or out-degree $> 1$, which admits a split.
\item[(3)] A split of $v$ into $v_1, v_2$ has $P(v_1) + P(v_2) = P(v)$ and changes no other
  $P(v')$, so
  \[
    \Pi \text{ drops by } (P(v)-1) - (P(v_1)-1) - (P(v_2)-1) = 1;
  \]
  with~(2), exactly $\Pi(G)$ splits reach deshared form.
\end{itemize}
\end{proof}

\begin{proposition}\label{prop:list-inv-f}
The following preserve $\approx_{\mathrm f}$: (1) \emph{series-run reweighting}
--- permuting and contracting the edges within a chain, preserving its sum; (2)
\emph{parallel-bundle bookkeeping} --- reordering parallel branches, combining two with equal
exponent by $\min$ of coefficients (idempotency when equal), and deleting or adjoining a branch
dominated at its own exponent; (3) \emph{tropical-zero adjunction} --- adding or removing a parallel
$+\infty$-edge or a subdividing $0$-edge; (4) \emph{vertex splitting and merging}
(Figure~\ref{fig:vertex-split}).
\end{proposition}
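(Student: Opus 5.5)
The plan is to reduce every move to a statement about the multiset of $s$--$z$ paths and their weights. The reason is that $\approx_{\mathrm f}$ is defined by equality of the normal-form lifted support $\hat S(f_G)$. That support is obtained by applying $\vmin$ to the multiset $\{(\alpha(\pi), c(\pi)) : \pi \text{ an } s\text{--}z \text{ path}\}$, where $\alpha(\pi)$ counts the $x_i$-labelled edges on $\pi$ and $c(\pi)$ sums its constant weights. For each move I will exhibit a correspondence between the paths of $G$ and those of $G'$ with one property: every lifted point is preserved, or is discarded by $\vmin$ in a way that leaves the minimum over each exponent unchanged. Since $\hat S$ depends only on this reduced set, that suffices. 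Wherever a move is local (applied to a sub-DAG), Proposition~\ref{prop:path-hom} lets me argue on the sub-DAG alone: the path set of the whole graph is built from the sub-DAG's by concatenation and union, and both operations commute with $\vmin$.

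I then treat the four families in turn.

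\textbf{(1) Series-run reweighting.} A chain of edges is traversed in full by any path that enters it. Permuting its edges, or contracting constant edges into one edge with the summed constant, therefore changes neither $\alpha(\pi)$ nor $c(\pi)$ for any path, because exponent addition and real addition are commutative. The path correspondence is the identity.

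\textbf{(2) Parallel-bundle bookkeeping.} Reordering branches does not change the path set. Merging two branches with equal exponent replaces two lifted points over the same $\alpha$ by their $\min$, which is exactly $\vmin$; idempotency $a\oplus a=a$ covers the case of equal coefficients. A branch dominated at its own exponent contributes a point that $\vmin$ already discards, so adding or deleting it has no effect. Composing with any context in series or parallel preserves this, since $\vmin$ commutes with Minkowski sum and union (Proposition~\ref{prop:config-semiring}).

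\textbf{(3) Tropical-zero adjunction.} A parallel $+\infty$-edge only creates paths with coefficient $+\infty$, and these lie outside the support by definition. A subdividing $0$-edge lengthens paths by a term contributing $(0,0)$, the multiplicative identity $\{(0,0)\}$ of $\mathcal{L}_d$.

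\textbf{(4) Vertex splitting and merging.} This is the step I expect to be the main obstacle, because Figure~\ref{fig:vertex-split} is not in view and the move must be pinned down before it can be checked. I will take a split of $v$ to mean the following: partition the $s$--$z$ paths through $v$ into two nonempty classes, then duplicate $v$ together with the needed in- and out-edges (duplicating upstream or downstream prefixes where necessary) so that each class runs through its own copy. This is consistent with $P(v_1)+P(v_2)=P(v)$ in Lemma~\ref{lem:deshare-count}. Under this reading the key check is a bijection between the paths of $G$ and those of the split graph, with each path's edge labels preserved in multiset. The risk is that a naive split of in-edges and out-edges separately creates new prefix/suffix combinations or destroys old ones. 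So I would define the split precisely as a bijection on paths, preserving the multiset of labels on each path, and verify that duplicating edges does not add paths. Merging is the inverse operation, legal only when it introduces no new combinations, or only ones already present in the path multiset. Either way the path multiset is preserved, and hence so is $\hat S$.

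Finally, every move preserves $\hat S(f_G)$, and Theorem~\ref{thm:triality} identifies $\hat S(f_G)$ with the class $[G]_{\approx_{\mathrm f}}$. Each move therefore preserves $\approx_{\mathrm f}$.
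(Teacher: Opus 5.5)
\textbf{The gap is in move (4).} For (1)--(3) you follow the paper's argument: reduce to the multiset of $s$--$z$ path monomials and check that each move changes it only by entries that $\vmin$ discards. Move (4) is the only nontrivial item, and there you never fix what the split is. You propose to ``define the split precisely as a bijection on paths'' and then conclude that it preserves the path multiset. That is circular: it covers whatever move happens to work, not the vertex split of Figure~\ref{fig:vertex-split}. Your general reading is also not the move used. That reading takes an arbitrary partition of the paths through $v$ and duplicates upstream or downstream prefixes as needed. Duplicating prefixes copies other vertices and so changes other $P(v')$, which contradicts the ``changes no other $P(v')$'' step of Lemma~\ref{lem:deshare-count}.

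\textbf{What is missing is showing that the concrete move creates no new prefix/suffix combinations.} An out-split partitions $\mathrm{out}(w) = O_1 \sqcup O_2$. Each copy $w_j$ receives a copy of \emph{every} in-edge of $w$ but only the out-edges in $O_j$; the in-split is symmetric. Three checks then give the result.
\begin{enumerate}
\item The split graph $G'$ is acyclic, since a cycle through $w_j$ maps to a cycle through $w$.
\item Each $s$--$z$ path of $G$ through $w$ leaves along exactly one out-edge, which lies in exactly one $O_j$. So it lifts uniquely to a path through $w_j$ of the same weight, and paths avoiding $w$ are unchanged.
\item No $s$--$z$ path of $G'$ visits both $w_1$ and $w_2$. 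Identifying the copies would otherwise give a closed walk through $w$ in the DAG $G$.
\end{enumerate}
Hence $w_j \mapsto w$ is a weight-preserving bijection on $s$--$z$ paths, and merging is its inverse. This acyclicity argument is exactly what settles the worry you raised and left open.

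A smaller point: Proposition~\ref{prop:path-hom} covers only series and parallel contexts, so it does not justify your locality claim for a move inside a general template such as $W$. Argue that directly: a path meeting the sub-DAG's interior enters at its source and leaves at its sink, so it factors as prefix, sub-path, suffix.
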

\begin{proof}
Write $\mathcal{P}(G)$ for the multiset of $s$--$z$ path monomials. Since $f_G$ is the $\min$
over $\mathcal{P}(G)$ in normal form, it suffices to show each operation leaves $\mathcal{P}(G)$
unchanged up to dominated entries.
\begin{itemize}
\item[(1)] The run is unbranched, so every path through it picks up the $+$-product of the
  run's weights; paths avoiding it are untouched.
\item[(2)] Permuting branches permutes $\mathcal{P}(G)$. Deletion: the surviving branch $A$ at
  the same exponent with $c_A \le c_C$ already attains the $\min$-least coefficient there.
\item[(3)] A path through a $+\infty$-edge has constant $+\infty$ and is dominated or
  unsupported; deletion reads the argument backwards.
\item[(4)] Out-split at $w$ along $\mathrm{out}(w) = O_1 \sqcup O_2$; in-split is symmetric.
  \emph{Acyclicity:} a cycle of $G'$ through $w_j$ maps to a cycle of $G$ through $w$.
  \emph{Path bijection:} an $s$--$z$ path of $G$ meeting $w$, entering along
  $e \in \mathrm{in}(w)$ and leaving along $e' \in O_j$, lifts to an $s$--$z$ path of $G'$
  through $w_j$ of the same weight; conversely, a path of $G'$ meets at most one of
  $w_1, w_2$ (else a cycle exists), and identifying them recovers a path of $G$. So
  $\mathcal{P}(G') = \mathcal{P}(G)$. Merging is the inverse.
\end{itemize}
\end{proof}

\begin{figure}[htbp]
  \centering
\resizebox{\linewidth}{!}{%
\begin{tikzpicture}[>=Stealth,every node/.style={font=\small},
  v/.style={circle,draw,inner sep=0pt,minimum size=5mm}]
  \node[v] (p1) at (0,0.9) {$p_1$};
  \node[v] (p2) at (0,-0.9) {$p_2$};
  \node[v] (w) at (1.6,0) {$w$};
  \node[v] (q1) at (3.2,0.9) {$q_1$};
  \node[v] (q2) at (3.2,-0.9) {$q_2$};
  \draw[->] (p1)--node[above]{$e_1$}(w);
  \draw[->] (p2)--node[below]{$e_2$}(w);
  \draw[->] (w)--node[above]{$g_1$}(q1);
  \draw[->] (w)--node[below]{$g_2$}(q2);
  \node at (1.6,-2.6) {\small original};
  \begin{scope}[xshift=5.4cm]
    \node[v] (p1) at (0,0.9) {$p_1$};
    \node[v] (p2) at (0,-0.9) {$p_2$};
    \node[v] (w1) at (1.9,1.4) {$w_1$};
    \node[v] (w2) at (1.9,-1.4) {$w_2$};
    \node[v] (q1) at (3.8,1.4) {$q_1$};
    \node[v] (q2) at (3.8,-1.4) {$q_2$};
    \draw[->] (p1)--node[above]{$e_1$}(w1);
    \draw[->] (p2)--node[above,pos=0.72,inner sep=1.5pt]{$e_2$}(w1);
    \draw[->] (p1)--node[below,pos=0.72,inner sep=1.5pt]{$e_1$}(w2);
    \draw[->] (p2)--node[below]{$e_2$}(w2);
    \draw[->] (w1)--node[above]{$g_1$}(q1);
    \draw[->] (w2)--node[below]{$g_2$}(q2);
    \node at (1.9,-2.6) {\small out-split, $O_1=\{g_1\}$, $O_2=\{g_2\}$};
  \end{scope}
  \begin{scope}[xshift=11.4cm]
    \node[v] (p1) at (0,1.4) {$p_1$};
    \node[v] (p2) at (0,-1.4) {$p_2$};
    \node[v] (w1) at (1.9,1.4) {$w_1$};
    \node[v] (w2) at (1.9,-1.4) {$w_2$};
    \node[v] (q1) at (3.8,0.9) {$q_1$};
    \node[v] (q2) at (3.8,-0.9) {$q_2$};
    \draw[->] (p1)--node[above]{$e_1$}(w1);
    \draw[->] (p2)--node[below]{$e_2$}(w2);
    \draw[->] (w1)--node[above]{$g_1$}(q1);
    \draw[->] (w1)--node[above,pos=0.28,inner sep=1.5pt]{$g_2$}(q2);
    \draw[->] (w2)--node[below,pos=0.28,inner sep=1.5pt]{$g_1$}(q1);
    \draw[->] (w2)--node[below]{$g_2$}(q2);
    \node at (1.9,-2.6) {\small in-split, $I_1=\{e_1\}$, $I_2=\{e_2\}$};
  \end{scope}
\end{tikzpicture}}
  \caption{The two vertex splits, realizing the distributivity clause of
  Proposition~\ref{prop:list-inv-f}. Left: an internal vertex $w$ with in-edges
  $e_1, e_2$ and out-edges $g_1, g_2$. Middle: the out-split along $O_1 = \{g_1\}$,
  $O_2 = \{g_2\}$, with each of $w_1, w_2$ carrying its own copy of $e_1$ and $e_2$.
  Right: the in-split along $I_1 = \{e_1\}$, $I_2 = \{e_2\}$, with each of $w_1, w_2$
  carrying its own copy of $g_1$ and $g_2$. In both panels every $p_i \to q_j$ path of
  the original survives exactly once.}
  \label{fig:vertex-split}
\end{figure}

\begin{lemma}\label{lem:mediant}
On a bundle of chains with pairwise distinct exponents and no $+\infty$-edge, deleting a chain
preserves the computed function iff its lifted point $(\alpha, c_\alpha)$ lies weakly above the
lower hull of the others, iff it is not a vertex of $\Gamma$, iff it is nowhere the unique
minimizer (Figure~\ref{fig:mediant}); and at most $d+1$ chains ($d = \dim \mathcal{N}(f)$) are
needed. The forward direction (a certificate $\Rightarrow$ deletion preserves the function) is
unconditional.
\end{lemma}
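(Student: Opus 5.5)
Reduce the whole lemma to the extended Newton polyhedron, using Lemma~\ref{lem:gamma-function}. Let the bundle have chains $C_1,\dots,C_m$ with lifted points $p_j=(\alpha_j,c_j)$, all finite and with pairwise distinct exponents. Then $f=\min_j(c_j+\langle\alpha_j,x\rangle)$ and $\Gamma(f)=\operatorname{conv}\{p_j\}+\mathbb{R}_{\ge0}e_{d+1}$. Deleting chain $C_k$ gives $f_{-k}=\min_{j\ne k}(\cdot)\ge f$. By Lemma~\ref{lem:gamma-function}(ii), $f_{-k}=f$ as functions iff $\Gamma(f_{-k})=\Gamma(f)$. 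So each of the four conditions will be shown equivalent to $p_k\in\Gamma_{-k}:=\operatorname{conv}\{p_j:j\ne k\}+\mathbb{R}_{\ge0}e_{d+1}$.

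\emph{Certificate direction (unconditional).} Suppose $p_k$ lies weakly above the lower hull of the others, i.e.\ $p_k=\sum_{j\ne k}\lambda_jp_j+te_{d+1}$ with $\lambda$ a probability vector and $t\ge0$. Then for every $x$,
\[
c_k+\langle\alpha_k,x\rangle=\sum_{j\ne k}\lambda_j\bigl(c_j+\langle\alpha_j,x\rangle\bigr)+t\ge f_{-k}(x).
\]
Hence $f=f_{-k}$. This argument uses nothing about distinct exponents or finiteness, which is why the forward implication holds unconditionally.

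\emph{Converse and the vertex and unique-minimizer conditions.}
\begin{itemize}
\item If $p_k\notin\Gamma_{-k}$, separate $p_k$ from the closed convex set $\Gamma_{-k}$ by a hyperplane. Since $\Gamma_{-k}$ is closed under adding $e_{d+1}$, the separating normal has nonnegative last coordinate. Perturbing it, we may take it to be $(x,1)$, a lower normal. This gives an $x$ with $c_k+\langle\alpha_k,x\rangle<f_{-k}(x)$, so $C_k$ is the unique minimizer at $x$ and deletion changes $f$.
\item Conversely, if $C_k$ is the unique minimizer at some $x$, then $f(x)<f_{-k}(x)$, so deletion changes $f$.
\item For the vertex condition: $p_k\in\Gamma_{-k}$ implies $\Gamma(f)=\Gamma_{-k}$. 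Then $p_k$ is not a vertex, because if it were, $p_k$ would be a vertex of $\operatorname{conv}\{p_j: j\ne k\}+\mathbb{R}_{\ge 0}e_{d+1}$ and hence one of the $p_j$ with $j\ne k$. That contradicts the distinct exponents, since the vertical ray only adds points over exponents already present.
\item Conversely, if $p_k\notin\Gamma_{-k}$, the strict lower supporting hyperplane $(x,1)$ from the first bullet exposes a face of $\Gamma(f)$ containing only $p_k$. So $p_k$ is a vertex.
\end{itemize}
I expect the main obstacle to be the perturbation step: turning a separating normal $(y,\eta)$ with $\eta\ge0$ into one with $\eta>0$. The case $\eta=0$ occurs when $\alpha_k$ lies outside $\operatorname{conv}\{\alpha_j: j\ne k\}$. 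I will handle it by tilting, replacing the normal with $(y/\epsilon,1)$ for small $\epsilon$, and using finiteness of all coefficients (no $+\infty$-edge) so that the strict inequality survives.

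\emph{Bound on the number of chains.} I read ``at most $d+1$ chains are needed'' as a Carath\'eodory statement: each deleted chain's point is certified by at most $d+1$ retained chains. Working in the affine hull of the lifted points, of dimension $d$ or $d+1$, the point $p_k$ lies in $\Gamma_{-k}$. So its projection $\alpha_k$ lies in $\operatorname{conv}\{\alpha_j: j\ne k\}\subseteq\mathcal{N}(f)$, a $d$-dimensional polytope. Take the lower-hull point above $\alpha_k$ on a lower face of $\Gamma_{-k}$ and triangulate that face. Carath\'eodory in dimension $d$ then writes $\alpha_k$ as a convex combination of at most $d+1$ exponents $\alpha_j$ whose lifts lie on that face, with coefficient combination at most $c_k$. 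This gives a certificate using at most $d+1$ chains, which is the claimed bound.
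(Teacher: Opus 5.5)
Your proof is correct and follows essentially the paper's route. The shared ingredients are the convex-combination certificate inequality, a lower functional $(x,1)$ linking vertices of $\Gamma$ to unique minimizers, the fact that vertices of $\operatorname{conv}\{p_j\}+\mathbb{R}_{\ge 0}e_{d+1}$ lie among the generators (the one place distinct exponents are used), and Carath\'eodory on a lower face for the $d+1$ bound. The differences are minor: you separate $p_k$ from $\Gamma_{-k}$, which is why you need the tilting step, whereas the paper exposes the vertex directly in $\Gamma(f)$, where any exposing functional automatically has positive last coordinate; the paper also clears denominators (the Carath\'eodory weights are rational for integer exponents) to state the certificate with integer multiplicities.
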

\begin{proof}
Write $f_C(x) = \langle \alpha_C, x \rangle + c_C$ and
$f_{A_i}(x) = \langle \alpha_i, x \rangle + c_i$; set
$S' = \{(\alpha_i, c_i)\}_{i \le r}$.

\emph{Certificate $\Rightarrow$ deletion (unconditional).} A certificate gives
$N\alpha_C = \sum_i k_i \alpha_i$ and $N c_C \ge \sum_i k_i c_i$, so for every $x$,
\[
  N f_C(x)
  = \langle N\alpha_C, x\rangle + N c_C
  \ge \sum_i k_i f_{A_i}(x)
  \ge N \min_j f_{A_j}(x).
\]
Hence $f_C \ge \min_j f_{A_j}$ pointwise, and a branch that never strictly beats the others may
be dropped from a $\min$ without changing it.

\emph{Equivalences (under distinctness and no $+\infty$).}
\begin{itemize}
\item \emph{Not a vertex $\Leftrightarrow$ nowhere the unique minimizer.}
  $\Gamma(f_B)$ recedes in the $e_{d+1}$ direction, so every supporting functional with positive
  last coordinate scales to $(\alpha, c) \mapsto \langle \alpha, x\rangle + c$ for some $x$; its
  minimum on $\Gamma(f_B)$ is the bundle's value at $x$, attained uniquely at $(\alpha_C, c_C)$
  iff $C$ is the unique minimizer there.

\item \emph{Nowhere unique minimizer $\Leftrightarrow$ deletion preserves the function.}
  The value of the bundle changes at $x$ iff $C$ is the unique minimizer at $x$.

\item \emph{Weakly above the lower hull $\Leftrightarrow$ not a vertex.}
  $\Gamma(f_B) = \operatorname{conv}(S' \cup \{(\alpha_C, c_C)\}) +
  \mathbb{R}_{\ge 0}e_{d+1}$. If $(\alpha_C, c_C) \in \Gamma(S')$ it is a convex
  combination of points other than itself, hence not extreme; conversely, if not extreme it lies
  in $\Gamma(S')$ because all other extreme points belong to $S'$.
\end{itemize}

\emph{Support bound.} Weakly above the lower hull means $\alpha_C \in
\operatorname{conv}(\{\alpha_i\})$ with $c_C \ge h(\alpha_C)$, where $h$ is the lower-hull
function of $S'$. By Carath\'eodory, $\alpha_C$ lies in a simplex of at most $d+1$ vertices of
$\operatorname{conv}(\{\alpha_i\})$, with rational barycentric coordinates $\lambda_i \ge 0$,
$\sum \lambda_i = 1$, and $\sum \lambda_i c_i = h(\alpha_C) \le c_C$. Clearing the common
denominator $N$ gives $k_i = N\lambda_i \in \mathbb{N}$: a certificate supported on at most
$d+1$ chains.
\end{proof}

\begin{figure}[htbp]
  \centering
\begin{tikzpicture}[>=Stealth,baseline,every node/.style={font=\scriptsize},
  v/.style={circle,draw,inner sep=0pt,minimum size=4.5mm},
  c/.style={circle,fill,inner sep=1pt}]
  \node[v] (s) at (0,0) {$s$};
  \node[v] (z) at (3.2,0) {$z$};
  \draw[->] (s) to[bend left=55] node[above]{$0$} (z);
  \node[c] (b) at (1.6,0.15) {};
  \draw[->] (s)--(b) node[midway,above]{$x$}; \draw[->] (b)--(z) node[midway,above]{$x$};
  \node[c,fill=black!25] (c) at (1.6,-1.25) {};
  \draw[->,black!30,densely dashed] (s)--(c) node[midway,below,text=black!45]{$c \ge 0$};
  \draw[->,black!30,densely dashed] (c)--(z) node[midway,below,text=black!45]{$x$};
  \node at (1.6,-2.1) {middle chain deleted};
\end{tikzpicture}
\hspace{0.7cm}
\begin{tikzpicture}[>=Stealth,baseline,every node/.style={font=\scriptsize},
  pt/.style={circle,fill,inner sep=1.3pt}]
  \fill[black!12] (0,0)--(2,0)--(2,1.75)--(0,1.75)--cycle;
  \draw[->] (-0.3,0)--(2.9,0) node[right]{$\alpha$};
  \draw[->] (0,-0.35)--(0,1.9) node[above]{height};
  \foreach \a in {0,1,2} \node at (\a,-0.3) {$\a$};
  \node[pt] (p0) at (0,0) {};
  \node[pt] (p2) at (2,0) {};
  \draw[very thick] (p0)--(p2);
  \node[pt,fill=black!35] (pc) at (1,1.1) {};
  \node[right=1pt of pc,text=black!55] at (1.1,1.15) {$(1,c)$};
  \draw[->,black!45,densely dashed] (1,0.12)--(1,1.0);
  \node[circle,draw,inner sep=1pt] at (1,0) {};
  \node at (1.45,2.15) {\phantom{x}};
\end{tikzpicture}
  \caption{Mediant deletion (Lemma~\ref{lem:mediant}) on the family
  $G_c = 0 \,\|\, (x;x) \,\|\, (c;x)$, computing $\min(0,\, c+x,\, 2x)$.
  Left: the middle chain (weight $c \ge 0$ then $x$) is deleted for every
  $c \ge 0$, leaving $\min(0,2x)$. Right: the lifted points and extended Newton polyhedron.}
  \label{fig:mediant}
\end{figure}

\renewcommand{\proofname}{Proof of Theorem~\ref{thm:invariant-complete}}
\begin{proof}
\begin{itemize}
\item[(i)] \emph{Reduction.} By Lemma~\ref{lem:deshare-count}, $\Pi(G)$ vertex splits deshare
  $G$. Within each chain, Proposition~\ref{prop:list-inv-f}(1) sorts the variable edges and
  contracts the trailing constants to one edge; a $+\infty$-chain contracts to a single
  $+\infty$-edge and is deleted by~(3). Two chains sharing an exponent have equal variable
  prefixes and differ only in a final constant; merging their prefix vertices (inverse
  out-splits) leaves a parallel pair $c \| c'$, replaced by $\min(c, c')$ via~(2) (idempotency
  deletes exact duplicates). What remains is one chain per support monomial, i.e.\
  $D_{\mathrm f}(G)$. For the mediant pass, the vertices of $\Gamma(f_G)$ are fixed and every
  non-vertex chain fails Lemma~\ref{lem:mediant}, so is deletable against the vertices;
  performing all such deletions leaves $D_{\mathrm v}(G)$.

\item[(ii)] \emph{Canonicity.} $D_{\mathrm f}(G)$ is a function of $f_G$ alone
  (Definition~\ref{def:deshared}), so $D_{\mathrm f}(G) = D_{\mathrm f}(G')$ iff
  $f_G = f_{G'}$, i.e.\ $\approx_{\mathrm f}$. The mediant pass keeps exactly the vertices of
  $\Gamma$, and two functions agree iff their polyhedra, hence vertex sets, agree
  (Lemma~\ref{lem:gamma-function}(ii)), giving the $\approx_{\mathrm v}$ statement and
  decidability.

\item[(iii)] \emph{Connectivity.} Each formal move is invertible (merging inverts splitting,
  adjunction inverts deletion, etc.), so the reduction path of~(i) can be reversed. Two graphs
  $G, G'$ with the same canonical form are therefore connected by
  $G \to D(G) = D(G') \to G'$; by~(ii), same canonical form is equivalent to congruence.
  Conversely each move preserves the class.
\end{itemize}
\end{proof}
\renewcommand{\proofname}{Proof}

\begin{table}[h]
  \caption{The calculus of invariant operations on DAGs derived from tropical semiring properties.}
  \label{tab:moves-axioms}
  \centering
\small
\setlength{\tabcolsep}{4pt}
\begin{tabular}{@{}p{7.6cm}p{4.6cm}@{}}
\toprule
graph operation & tropical semiring property \\
\midrule
\emph{series-run reweighting}: redistribute weights along an unbranched run,
  sum preserved
  & $+$ associative, commutative \\
\emph{parallel-bundle bookkeeping}: permute or delete/adjoin a dominated branch
  & $\min$ assoc., comm., idempotent \\
\emph{$+\infty$-edge adjunction}: add/remove $+\infty$-weighted edges
  (preserving 2-terminality)
  & $\min(a, {+}\infty) = a$, $a + \infty = +\infty$ \\
\emph{vertex splitting / merging}: split edges at an internal vertex into two
  copies, or merge
  & distributivity of $+$ over $\min$ \\
\midrule
\emph{mediant deletion}: delete a path whose monomial lies weakly above the
  lower hull of the others
  & redundant monomials may be deleted by convexity \\
\bottomrule
\end{tabular}
\end{table}

\section{Geometry of the two substitutions}\label{app:geometry}

\begin{definition}\label{def:coord-deletion}
Write $\pi_i : \mathbb{R}^d \to \mathbb{R}^{d-1}$ for deletion of coordinate $i$, $\alpha_{-i} = \pi_i(\alpha)$,
and $\hat\pi_i = \pi_i \times \mathrm{id}_{\mathbb{R}}$ for its lift keeping the height. The shadow uses
$\hat\pi_i(\Gamma(f))$, which retains coefficients; $\pi_i(\mathcal{N}(f))$ forgets them.
\end{definition}

\begin{definition}\label{def:rec-complex}
For a polyhedral complex $\mathcal{C}$ and $v \ne 0$, $\operatorname{rec}_v(\mathcal{C}) = \{C \in \mathcal{C} : v \in
\operatorname{rec}(C)\}$ is the subcomplex of cells stable under translation by $v$
\citep[\S A]{joswig2021essentials}.
\end{definition}

\renewcommand{\proofname}{Proof of Proposition~\ref{prop:image-Pi}}
\begin{proof}
$P_i$ acts by a single transformation across the three languages by the triality
(Theorem~\ref{thm:triality}): the rule ``set $x_i = +\infty$'' respects $\approx_{\mathrm f}$, hence
descends to classes and has one avatar in each language.

\emph{Algebra.} With $\hat S = \hat S(f_G)$, the sign of $\alpha_i$ partitions
$\hat S = \hat S^{>}_i \sqcup \hat S^{0}_i$ ($\alpha_i > 0$ resp.\ $=0$). Since $+\infty$ is
absorbing for $+$, setting $x_i = +\infty$ kills every monomial with $\alpha_i \ge 1$ and fixes
the rest with coefficients, so $\hat S(P_i(f_G)) = \hat S^{0}_i$; writing $h$ for the
$\hat S^{>}_i$ block, $f_G = \min(h, P_i(f_G))$.

\emph{Geometry.} The exponents lie in $\mathbb{Z}^{t+1}_{\ge 0}$ and the upward ray fixes coordinate
$i$, so $\Gamma(f_G) \subseteq \{\alpha_i \ge 0\}$; hence $\{\alpha_i = 0\}$ is supporting and
$\Gamma(f_G) \cap \{\alpha_i = 0\}$ is a (possibly empty) face, carrying exactly $\hat S^{0}_i$.
For the hypersurface, take $f_G$ in function form. A cell of $\mathcal{T}(f_G)$ has an argmin set
$\sigma$; those with $\sigma \subseteq \hat S^{0}_i$ are stable under increasing $x_i$ (no
$x_i$ appears), while any $\sigma$ containing a monomial with $\alpha_i > 0$ is eventually
dominated by a surviving one, so leaves the cell. Thus the cells stable in direction $e_i$ are
exactly those on $\hat S^{0}_i$, i.e.\ $\operatorname{rec}_{e_i}(\mathcal{T}(f_G))$; projecting by
$\pi_i$ gives $\mathcal{T}(P_i(f_G))$, provided $\hat S^{0}_i \ne \emptyset$ (else $P_i(f_G) = +\infty$).

\emph{Graph.} The $s$--$z$ paths through an $x_i$-edge acquire weight $+\infty$ and drop out of the
$\min$; those avoiding $E_i$ are the paths of $G \setminus E_i$ with unchanged weight, so
$f_G|_{x_i = +\infty} = f_{G \setminus E_i}$.

\medskip
\emph{The $Q_i$ column.} $Q_i$ is evaluation at $x_i = 0$, a semiring homomorphism, so it acts
term by term: the term at $\alpha$ becomes $c_\alpha + x_{-i}^{\alpha_{-i}}$, each $x_i$
contributing the $+$-identity $0$.

\emph{Algebra.} In lifted coordinates $Q_i(f_G)$ is the $\min$ over the projected set
$A = \hat\pi_i(\hat S(f_G))$; as $\pi_i$ need not be injective on the support, $A$ may carry
several points over one exponent, and normal form keeps the least:
\[
  \hat S(Q_i(f_G)) = \vmin(A),
  \quad\text{support } \pi_i(S(f_G)),
  \quad\text{coefficient } \min\{c_\alpha : \pi_i(\alpha) = \beta\}.
\]

\emph{Geometry.} Adding the upward ray $R$ absorbs the points $\vmin$ discards, each lying on
$R$ above a retained one, so
\[
  \Gamma(Q_i(f_G)) = \operatorname{conv}(A) + R = \hat\pi_i(\Gamma(f_G)),
\]
since $\hat\pi_i$ is linear and fixes $R$.

\emph{Hypersurface.} As $Q_i(f_G)$ is $f_G$ restricted to $\{x_i = 0\}$, a tie of $Q_i(f_G)$ is
one of $f_G$ on that slice and conversely, save that two monomials sharing a $\pi_i$-image agree
on the whole slice and merge into one; their common wall fills a full-dimensional cell on which a
single monomial of $Q_i(f_G)$ dominates, and removing those relative interiors is the stated
correction.

\emph{Graph.} Reweighting $E_i$ to $0$ substitutes $x_i = 0$ into every path monomial, and the
two normalizing moves are invariant; this is reweighting rather than contraction, which can
create or destroy an $s$--$z$ path and so misrepresents $Q_i$.

\medskip
\emph{$Q_i \le P_i$.} The coordinate face lies in the coordinate shadow:
\[
  \Gamma(P_i(f_G))
  = \hat\pi_i\!\bigl(\Gamma(f_G) \cap \{\alpha_i = 0\}\bigr)
  \subseteq \hat\pi_i\!\bigl(\Gamma(f_G)\bigr)
  = \Gamma(Q_i(f_G)),
\]
and since tropical addition of functions is the convex hull of the union of their polyhedra,
$\min(Q_i(f_G), P_i(f_G)) = Q_i(f_G)$, i.e.\ $Q_i(f_G) \le P_i(f_G)$.
\end{proof}
\renewcommand{\proofname}{Proof}

\begin{corollary}\label{cor:fiber-Pi}
For a tropical polynomial $g$ in the variables $\{x_j : j \ne i\}$,
\[
  P_i^{-1}(g) = \bigl\{\, g \oplus h \;:\; \text{every monomial of } h \text{ has } \alpha_i \ge 1 \,\bigr\},
\]
\[
  Q_i^{-1}(g) = \Bigl\{\, \bigoplus_{\beta \in S(g)} c_\beta^{g} \odot x_{-i}^{\odot\beta} \odot x_i^{\odot k_\beta} \;\oplus\; r \;:\;
    k_\beta \in \mathbb{N},\; \pi_i(S(r)) \subseteq S(g),\;
      c_\alpha^{r} \ge c_{\pi_i(\alpha)}^{g} \,\Bigr\}.
\]
The shapes are opposite: $P_i$ pins the $\alpha_i = 0$ layer and frees everything above position
$i$, while $Q_i$ frees the exponent $k_\beta$ at position $i$ and constrains everything above
the fiber minimum.
\end{corollary}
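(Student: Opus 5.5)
We prove both fiber descriptions by double inclusion, working at the formal level in normal form, which is the level at which Proposition~\ref{prop:image-Pi} gives the algebra rows. Throughout we use the algebra row of that proposition: $\hat S(P_i(f)) = \hat S^{0}_i(f)$, and $\hat S(Q_i(f)) = \vmin(\hat\pi_i(\hat S(f)))$.

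\emph{The $P_i$ fiber.} Take any $f$ in $t+1$ variables. We split its lifted support by the sign of $\alpha_i$, exactly as in the proof of Proposition~\ref{prop:image-Pi}. Let $h$ be the block with $\alpha_i \ge 1$. Then $f = \min(h, P_i(f))$. So $P_i(f) = g$ holds iff $f = g \oplus h$ for some $h$ whose monomials all have $\alpha_i \ge 1$. For the converse, apply $P_i$ to $g \oplus h$. Since $P_i$ is evaluation at $x_i = +\infty$, it is a homomorphism for $\oplus$. It kills every monomial of $h$, and it fixes $g$, because $g$ does not involve $x_i$. This direction is routine.

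\emph{The $Q_i$ fiber.} Take $f$ with $Q_i(f) = g$, and read off the support and coefficients from the algebra row. First, $\pi_i(S(f)) = S(g)$. Second, for each $\beta \in S(g)$, $\min\{c_\alpha : \pi_i(\alpha) = \beta\} = c^g_\beta$. Now fix $\beta$ and pick one $\alpha$ in the fiber over $\beta$ attaining this minimum. Write $\alpha = (\beta, k_\beta)$ in coordinate $i$; this chosen term is $c^g_\beta \odot x_{-i}^{\odot\beta}\odot x_i^{\odot k_\beta}$. Collect all remaining monomials into $r$. Each monomial of $r$ projects into $S(g)$, and its coefficient is at least the fiber minimum $c^g_{\pi_i(\alpha)}$. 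This puts $f$ in the displayed form.

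Conversely, take $f$ of the displayed form and apply $Q_i$ termwise; this is legitimate because $Q_i$ is a semiring homomorphism. The chosen terms contribute exactly $g$. Each term of $r$ lands on an exponent already in $S(g)$, with coefficient no smaller than the one there. So $\vmin$ discards it, and $Q_i(f) = g$.

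\emph{Main obstacle.} The delicate point is normal form. If $k_\beta$ coincides with the exponent of some term of $r$, the two terms share an exponent and must be combined by $\vmin$. The constraint $c^r_\alpha \ge c^g_{\pi_i(\alpha)}$ guarantees that the chosen term's coefficient survives this reduction, so the representation remains valid. We will state this explicitly.

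Finally, the remark about opposite shapes needs no separate argument. It is a reading of the two formulas: $P_i$ fixes the $\alpha_i = 0$ layer and leaves everything else free, while $Q_i$ frees the exponent in coordinate $i$ and constrains everything else by a lower bound on coefficients.
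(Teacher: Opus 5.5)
Your proposal is correct and follows essentially the paper's route: for $P_i$ you split the lifted support by whether $\alpha_i=0$ and use $f=\min(h,P_i(f))$, and for $Q_i$ you combine the homomorphism property with the algebraic clause of Proposition~\ref{prop:image-Pi}. Your explicit choice of a coefficient-minimizing monomial over each $\beta\in S(g)$ spells out the inclusion of the fiber into the displayed set, which the paper's proof leaves implicit (it only checks which $\min(g_k,r)$ lie in the fiber). Your direct identity $f=\min(h,P_i(f))$ is also cleaner than the paper's contrapositive for $P_i$, which does not cover a coefficient change on an unchanged support.
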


\renewcommand{\proofname}{Proof of Corollary~\ref{cor:fiber-Pi}}
\begin{proof}
\emph{$P_i$-fiber.} By the algebraic clause, every $f = \min(g, h)$ with each monomial of $h$
carrying $\alpha_i \ge 1$ has $\hat S(f) = \hat S(g) \sqcup \hat S(h)$ and $P_i(f) = g$.
Conversely any $f$ not of that form either omits a monomial of $g$, forcing
$S(P_i(f)) \subsetneq S(g)$, or contributes a monomial with $\alpha_i = 0$ beyond $g$,
forcing $S(g) \subsetneq S(P_i(f))$.

\emph{$Q_i$-fiber.} $Q_i$ is a homomorphism, so any
\[
  g_k = \min_\beta \bigl(c_\beta^{g} + x_{-i}^{\beta} + k_\beta x_i\bigr)
\]
has $Q_i(g_k) = g$, since each $k_\beta x_i$ evaluates to $0$. For $f = \min(g_k, r)$,
\[
  Q_i(f) = \min(g,\, Q_i(r)) = g
  \quad\iff\quad
  Q_i(r) \ge g \text{ pointwise};
\]
by the algebraic clause that is $\pi_i(S(r)) \subseteq S(g)$ together with
$c_\alpha^{r} \ge c_{\pi_i(\alpha)}^{g}$ at every monomial of $r$.
\end{proof}
\renewcommand{\proofname}{Proof}

\section{Edge substitutions and the series-parallel boundary}\label{app:sp-boundary}

\begin{definition}\label{def:edge-sub}
For a two-terminal DAG $H$ with $r$ labeled edges, the \emph{edge substitution} $\Phi_H$ sends
two-terminal DAGs $A_1, \dots, A_r$ to the two-terminal DAG obtained by replacing edge~$i$ of
$H$ with~$A_i$, identifying the edge's endpoints with $A_i$'s source and sink.
Series composition is the edge substitution into a two-edge path; parallel composition is the
edge substitution into a pair of parallel edges.
\end{definition}

\begin{definition}\label{def:sp}
A two-terminal DAG is \emph{series-parallel} (SP) if built from a single $s \to z$ edge by
series and parallel compositions. The \emph{Wheatstone graph} $W$ is $K_4$ minus an edge, the
smallest non-SP two-terminal DAG.
\end{definition}

\begin{theorem}[\citealp{duffin1965-sp,valdes1982-sp-digraphs}]\label{thm:sp-equiv}
A two-terminal DAG is SP iff it contains no subgraph homeomorphic to $W$.
\end{theorem}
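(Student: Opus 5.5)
This is a classical characterization of series-parallel two-terminal DAGs, so I will prove the two implications separately, using structural induction for one and a decomposition argument for the other. For a two-terminal DAG $G$ with source $s$ and sink $z$, ``homeomorphic to $W$'' means a subdivision of $W$: four distinct branch vertices $s', a, b, z'$ joined by five internally disjoint directed paths $s'\to a$, $s'\to b$, $a\to b$, $a\to z'$, $b\to z'$. I will use two standard reductions. First, every vertex lies on an $s$--$z$ path. Second, any subdivided $W$ inside $G$ can be extended so that its poles are $s$ and $z$.

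\emph{SP $\Rightarrow$ no $W$.} I will argue by induction on the series/parallel construction, proving a stronger invariant: every subdivided $W$ in a graph $G$ built from $A$ and $B$ lies entirely inside $A$ or entirely inside $B$. The case of a single edge is trivial. In $A;B$, the cut vertex $z_A=s_B$ separates the graph. A subdivided $W$ is $2$-connected once its poles are joined, and in fact it remains $2$-connected as an undirected graph. A $2$-connected subgraph cannot straddle a cut vertex, so the $W$ sits in $A$ or in $B$. In $A\|B$, the pair $\{s,z\}$ is a separation pair. Suppose the $W$ used vertices from both $A\setminus\{s,z\}$ and $B\setminus\{s,z\}$. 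Then every connection between the two sides passes through $s$ or $z$. The only branch vertices of degree $3$ in $W$ that are not poles are $a$ and $b$, and they are joined by the bridge path $a\to b$. Going through the case split on which side contains $a$ and which contains $b$ should force one of the five paths to repeat $s$ or $z$ internally, which contradicts disjointness. By induction, $A$ and $B$ contain no $W$, so neither does $G$.

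\emph{No $W$ $\Rightarrow$ SP.} I will argue by induction on $|E(G)|$. If $G$ has a cut vertex $v$ separating $s$ from $z$, then $G = G_1;G_2$ with both parts two-terminal and $W$-free, and I apply induction to each. If $G\setminus\{s,z\}$ is disconnected, or if $G$ has parallel $s$--$z$ edges, then $G = G_1\|G_2$ and I again apply induction. The remaining case is the hard one: $G$ is ``prime'', meaning it has no such decomposition but is not a single edge. Here I will construct a $W$ directly. Since $G$ is not a series composition, take two internally disjoint $s$--$z$ paths $P_1, P_2$; Menger's theorem supplies them. Since $G\setminus\{s,z\}$ is connected, there is a path $Q$ between an interior vertex of $P_1$ and an interior vertex of $P_2$ that avoids $s$ and $z$. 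Using acyclicity and the fact that every vertex lies on an $s$--$z$ path, I will make this bridge directed: choose $Q$ minimal and orient it forward along the topological order. Then $s$, the two endpoints of the bridge, and $z$ serve as the four branch vertices of a subdivided $W$.

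The main obstacle is this last directed-bridge step. An undirected bridge between $P_1$ and $P_2$ need not be a directed path. I would handle this by taking a vertex $u$ of $Q$ that is a source within $Q$ (or a sink), and extending from $u$ with a directed path back to $s$ or forward to $z$. That produces either a directed crossing or an alternative pair of disjoint paths, and I would then re-choose $P_1, P_2$ to minimize the total length of the bridge. If this becomes too delicate, I would fall back on citing the characterization of Duffin and Valdes--Tarjan--Lawler directly for the directed two-terminal setting, which is how the paper attributes it.
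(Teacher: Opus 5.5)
The paper gives no argument beyond its citation to Duffin and Valdes--Tarjan--Lawler, so your citation fallback is exactly the paper's route. Your self-contained argument is only partly complete.

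\textbf{SP $\Rightarrow$ no $W$.} This direction is sound. A subdivided $K_4-e$ is $2$-connected, so it cannot straddle the cut vertex of $A;B$. In $A\|B$ the real contradiction is not ``disjointness''. It is that a directed path can never pass through the source $s$ or the sink $z$ internally. So $s$ and $z$ can only serve as poles, and the connected graph $W-\{s',z'\}$, hence all of $W$, lies on one side.

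Two small repairs are needed. First, your ``standard reduction'' that a subdivided $W$ extends to one with poles $s,z$ is false. Prefix a Wheatstone bridge with an edge $s\to s'$ and suffix it with $z'\to z$: then $s$ has out-degree $1$ and cannot be a pole. You never actually use this claim, so it can simply be dropped. Second, your parallel case must also cover ``$G$ has an $s\to z$ edge and other edges''. Otherwise Menger may hand you that edge as $P_1$, and it has no interior vertex.

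\textbf{The gap: the prime case.} This case is the whole content of the directed theorem, and it is not closed. You cannot ``orient $Q$ forward'': its edges already carry orientations, and a minimal bridge can zigzag. Take edges $s\to a,\ s\to b,\ s\to c,\ a\to z,\ b\to z,\ c\to z,\ a\to c,\ b\to c$ with $P_1=saz$ and $P_2=sbz$. The only bridge is $a\to c\leftarrow b$. A $W$ (branch vertices $s,a,c,z$) appears only after rerouting $P_2$ through $c$. Your proposed repair names no quantity that provably decreases. The extension from a source or sink of $Q$ may also run back into $P_1$, $P_2$ or $Q$.

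\textbf{One way to close it.} Make the re-choosing rigorous by growing sides $S_1\supseteq P_1$ and $S_2\supseteq P_2$ with directed ears.
\begin{itemize}
\item While some edge is uncovered, take an $s$--$z$ path through it; every edge of a two-terminal DAG lies on one. Let $E$ be its maximal segment through that edge whose interior avoids $S_1\cup S_2$. This $E$ is a directed path $p\to q$.
\item If $p$ and $q$ lie in a common side $S_i$ (both sides contain $s$ and $z$), add $E$ to $S_i$. $S_i$ stays two-terminal: it contains an $s\to p$ path and a $q\to z$ path, and $G$ is acyclic, so the concatenation is a path.
\item If this always happens, every edge between internal vertices lies in a single side. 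Then $G-\{s,z\}$ is disconnected, contrary to primality.
\item Otherwise some ear runs from an internal vertex $p$ of $S_1$ to an internal vertex $q$ of $S_2$, or the reverse. Choose $s$--$z$ paths $R_1\subseteq S_1$ through $p$ and $R_2\subseteq S_2$ through $q$; they are internally disjoint. The four halves of $R_1$ and $R_2$, together with $E$, form a subdivided $W$ with branch vertices $s,p,q,z$.
\end{itemize}
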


\renewcommand{\proofname}{Proof of Proposition~\ref{prop:wheatstone-glue}}
\begin{proof}
Specializing each $A_i$ to a single edge, a series-parallel composite of
single edges is SP (Definition~\ref{def:sp}); but $\Phi_W$ on five single edges is $W$, non-SP
by Theorem~\ref{thm:sp-equiv}.
\end{proof}
\renewcommand{\proofname}{Proof}

\end{document}